\newif\ifFULL
\newif\ifnotFULL	
\newcommand{\Vladimir}{Vladimir}
\newcommand{\DOT}{.}
\newcommand{\e}{\mathrm{e}}
\newcommand{\bbbr}{\mathbb{R}}
\newcommand{\III}{\mathbb{I}}
\newcommand{\LLL}{\mathcal{L}}
\newcommand{\st}{\mid}
\newtheorem{theorem}{Theorem}
\newtheorem{corollary}{Corollary}
\newtheorem{lemma}{Lemma}
\theoremstyle{definition}
\newtheorem*{remark}{Remark}
  \newcommand{\bluebegin}{\begingroup\color{blue}}
  \newcommand{\blueend}{\endgroup}
\newlength{\IndentI}
\newlength{\IndentII}
\newlength{\IndentIII}
\newlength{\IndentIV}
\newlength{\WidthI}
\newlength{\WidthII}
\newlength{\WidthIII}
\newlength{\WidthIV}
\begin{document}
\title{Prediction with expert evaluators' advice}
\author{Alexey Chernov and Vladimir Vovk\\
  \{\texttt{chernov,vovk}\}\texttt{@cs.rhul.ac.uk}
}

\maketitle

\begin{abstract}
  We introduce a new protocol for prediction with expert advice
  in which each expert evaluates the learner's and his own performance
  using a loss function that may change over time
  and may be different from the loss functions
  used by the other experts.
  The learner's goal is to perform better or not much worse
  than each expert, as evaluated by that expert,
  for all experts simultaneously.
  If the loss functions used by the experts
  are all proper scoring rules and all mixable,
  we show that the defensive forecasting algorithm
  enjoys the same performance guarantee
  as that attainable by the Aggregating Algorithm in the standard setting
  and known to be optimal.
  This result is also applied
  to the case of ``specialist'' (or ``sleeping'') experts.
  In this case,
  the defensive forecasting algorithm reduces
  to a simple modification of the Aggregating Algorithm.
\end{abstract}

\section{Introduction}

We consider the problem of online sequence prediction.
A process generates outcomes $\omega_1,\omega_2,\ldots$ step by step.
At each step $t$, a learner tries to guess the next outcome
announcing his prediction $\gamma_t$.
Then the actual outcome $\omega_t$ is revealed.
The quality of the learner's prediction is measured
by a loss function:
the learner's loss at step $t$ is $\lambda(\gamma_t,\omega_t)$.

Prediction with expert advice is a framework
that does not make any assumptions about the generating process.
The performance of the learner is compared
to the performance of several other predictors called experts.
At each step, each expert gives his prediction $\gamma^n_t$,
then the learner produces his own prediction $\gamma_t$
(possibly based on the experts' predictions at the last step
and the experts' predictions and outcomes at all the previous steps),
and the accumulated losses are updated for the learner 
and for the experts.
There are many algorithms for the learner in this framework;
for a review, see~\cite{cesabianchi/lugosi:2006}.

In practical applications of the algorithms
for prediction with expert advice,
choosing the loss function is often a problem.
The task may have no natural measure of loss,
except the vague concept that the closer the prediction
to the outcome the better.
Thus one can select among several common loss functions,
for example, the quadratic loss
(reflecting the idea of least squares methods)
or the logarithmic loss (which has an information theory background).
A similar issue arises when experts themselves are prediction algorithms
that optimize some losses internally.
Then it is unfair to these experts
when the learner competes with them according to a ``foreign'' loss function.

This paper introduces a new version of the framework of prediction with expert advice
where there is no single fixed loss function
but some loss function is linked to every expert.
The performance of the learner is compared to
the performance of each expert according to the loss
function linked to that expert.
Informally speaking,
each expert has to be convinced
that the learner performs almost as well as, or better than,
that expert himself.

We prove that a known algorithm for the learner,
the defensive forecasting algorithm
\cite{chernov/etal:2008supermartingales},
can be applied in the new setting
and gives the same performance guarantee as that attainable in the standard setting,
\ifnotFULL
  provided all loss functions are proper scoring rules.
\fi
\ifFULL\bluebegin
  The only new requirement is that all loss functions used by the experts
  must be ``similar''.
  All strictly proper scoring rules
  (in particular, the quadratic and logarithmic loss functions)
  are similar to each other in this sense.
\blueend\fi

Another framework to which our methods can be fruitfully applied
is that of ``specialist experts'':
see, e.g., \cite{freund/etal:1997}, \cite{blum/mansour:2007},
and \cite{kleinberg/etal:2008}.
We generalize some of the known results
in the case of mixable loss functions.

To keep presentation as simple as possible,
we restrict ourselves to binary outcomes $\{0,1\}$,
predictions from $[0,1]$, and a finite number of experts.
We formulate our results for mixable loss functions only.
However, these results can be easily transferred 
to more general settings (non-binary outcomes,
arbitrary prediction spaces, countably many experts, second-guessing experts, etc.)\
where the methods of~\cite{chernov/etal:2008supermartingales} work.

\section{Prediction with simple experts' advice}
\label{sec:PEA}

In this preliminary section
we recall the standard protocol of prediction with expert advice
and some known results.

Let $\{0,1\}$ be the set of possible \emph{outcomes}~$\omega$,
$[0,1]$ be the set of possible \emph{predictions}~$\gamma$,
and ${\lambda:[0,1]\times\{0,1\}\to[0,\infty]}$
be the \emph{loss function}. 
The loss function $\lambda$ and parameter $N$
(the number of experts)
specify the game of prediction with expert advice.
The game is played by Learner, Reality, and $N$ experts,
Expert 1 to Expert $N$,
according to the following protocol.

\bigskip

\noindent
\mbox{\textsc{Prediction with expert advice}}\nopagebreak

\smallskip

\parshape=9
\IndentI  \WidthI
\IndentI  \WidthI
\IndentI  \WidthI
\IndentII \WidthII
\IndentII \WidthII
\IndentII \WidthII
\IndentII \WidthII
\IndentII \WidthII
\IndentI  \WidthI
\noindent
$L_0:=0$.\\ 
$L_0^n:=0$, $n=1,\ldots,N$.\\
FOR $t=1,2,\dots$:\\
  Expert $n$ announces $\gamma_t^n\in[0,1]$, $n=1,\ldots,N$.\\
  Learner announces $\gamma_t\in[0,1]$.\\
  Reality announces $\omega_t\in\{0,1\}$.\\
  $L_t:=L_{t-1}+\lambda(\gamma_t,\omega_t)$.\\
  $L_t^n:=L_{t-1}^n+\lambda(\gamma_t^n,\omega_t)$,
    $n=1,\ldots,N$.\\
END FOR

\bigskip

\noindent
The goal of Learner is to keep his loss $L_t$
smaller or at least not much greater than the loss $L_t^n$ of Expert~$n$,
at each step $t$ and for all $n=1,\ldots,N$.

We only consider loss functions that have the following properties:
\begin{description}
\item[Assumption 1:]
  $\lambda(\gamma,0)$ and $\lambda(\gamma,1)$
  are continuous in $\gamma\in[0,1]$
  and for the standard (Aleksandrov's) topology on $[0,\infty]$.
\item[Assumption 2:]
  There exists $\gamma\in[0,1]$
  such that $\lambda(\gamma,0)$
  and $\lambda(\gamma,1)$ are both finite.
\item[Assumption 3:]
  There exists no $\gamma\in[0,1]$
  such that $\lambda(\gamma,0)$
  and $\lambda(\gamma,1)$ are both infinite.
\end{description}

The \emph{superprediction set} for a loss function $\lambda$ is
\begin{equation}\label{eq:Sigma}
  \Sigma_{\lambda}
  :=
  \left\{
    (x,y)\in[0,\infty)^2
    \st
    \exists\gamma\,\lambda(\gamma,0)\le x\text{ and }\lambda(\gamma,1)\le y
  \right\}.
\end{equation}
By Assumption 2, this set is non-empty.
For $\eta>0$,
let $E_{\eta}:[0,\infty]^2\to[0,1]^2$
be the homeomorphism defined by
$E_{\eta}(x,y):=(\e^{-\eta x},\e^{-\eta y})$.
The loss function $\lambda$ is called \emph{$\eta$-mixable}
if the set $E_{\eta}(\Sigma_{\lambda})$ is convex.
It is called \emph{mixable} if it is $\eta$-mixable for some $\eta>0$.

\begin{theorem}\label{thm:standard}
  If a loss function $\lambda$ is $\eta$-mixable,
  then there exists a strategy for Learner
  that guarantees that in the game of prediction with expert advice
  with $N$ experts and the loss function $\lambda$
  it holds,
  for all $t$
  and for all $n=1,\ldots,N$,
  that
  \begin{equation}\label{eq:standard}
    L_t\le L_t^n + \frac{1}{\eta} \ln N.
  \end{equation}
  The bound is optimal:
  if $\lambda$ is not $\eta$-mixable,
  then no strategy for Learner
  can guarantee (\ref{eq:standard}).
\end{theorem}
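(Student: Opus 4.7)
The plan is to use the Aggregating Algorithm (AA) of Vovk, which is the natural candidate here; the lower bound will come from a direct construction that exploits the failure of convexity when $\lambda$ is not $\eta$-mixable.

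For the upper bound, I would maintain normalized weights on the experts, starting from the uniform distribution $w_0^n := 1/N$. At step $t$, after seeing the experts' predictions $\gamma_t^1,\ldots,\gamma_t^N$, I form the ``generalized prediction'' $g_t:\{0,1\}\to[0,\infty]$ by
\[
  g_t(\omega)
  :=
  -\frac{1}{\eta}\ln\sum_{n=1}^{N} w_{t-1}^n\,\e^{-\eta\lambda(\gamma_t^n,\omega)}.
\]
The key observation is that the point $\bigl(\e^{-\eta g_t(0)},\e^{-\eta g_t(1)}\bigr)$ is a convex combination of the points $\bigl(\e^{-\eta\lambda(\gamma_t^n,0)},\e^{-\eta\lambda(\gamma_t^n,1)}\bigr)\in E_\eta(\Sigma_\lambda)$ and, by $\eta$-mixability of $\lambda$, therefore itself lies in $E_\eta(\Sigma_\lambda)$. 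By definition of the superprediction set, there exists $\gamma_t\in[0,1]$ with $\lambda(\gamma_t,\omega)\le g_t(\omega)$ for $\omega\in\{0,1\}$. Learner outputs this $\gamma_t$. (Continuity from Assumption~1 guarantees the superprediction set is closed, so the ``inf'' is attained.)

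Next I would update weights multiplicatively, $w_t^n\propto w_{t-1}^n\,\e^{-\eta\lambda(\gamma_t^n,\omega_t)}$, and track the unnormalized partition function $W_t:=\sum_n (1/N)\e^{-\eta L_t^n}$. The choice of $\gamma_t$ above yields $\e^{-\eta\lambda(\gamma_t,\omega_t)}\ge\sum_n w_{t-1}^n\e^{-\eta\lambda(\gamma_t^n,\omega_t)}$, which telescopes to $\e^{-\eta L_t}\ge W_t$. Keeping only the $n$-th term of $W_t$ gives $\e^{-\eta L_t}\ge (1/N)\e^{-\eta L_t^n}$; taking logarithms yields~(\ref{eq:standard}). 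The main technical obstacle is the inversion step: given a point in $E_\eta(\Sigma_\lambda)$, extracting a single $\gamma_t\in[0,1]$ whose loss vector is dominated by~$g_t$. This is where Assumptions~1--3 and the explicit form of $E_\eta$ must be used, and it is the reason mixability (convexity of $E_\eta(\Sigma_\lambda)$), rather than mere convexity of $\Sigma_\lambda$, is the right condition.

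For the optimality statement, I would argue by contradiction: suppose Learner has a strategy achieving~(\ref{eq:standard}) for every $N$ and some $\eta$ for which $\lambda$ is not mixable. Failure of $\eta$-mixability means there exist finitely many points $(\e^{-\eta\lambda(\gamma^n,0)},\e^{-\eta\lambda(\gamma^n,1)})$ and weights $p_n$ whose convex combination lies strictly outside $E_\eta(\Sigma_\lambda)$. I would construct an adversarial game in which the experts use these $\gamma^n$ as their predictions (with multiplicities encoding $p_n$, or repeating the round so that $(1/\eta)\ln N$ becomes negligible compared with the per-round deficit) and Reality picks $\omega_t$ to maximize Learner's instantaneous loss. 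Since the aggregate loss vector Learner would need to match lies outside $E_\eta(\Sigma_\lambda)$, any $\gamma_t\in[0,1]$ must suffer a strictly positive per-round excess over the best expert, which accumulates to violate~(\ref{eq:standard}) once the number of rounds is large enough. I expect the main obstacle here to be quantifying this per-round gap uniformly in the construction; a compactness argument using Assumption~1 should do the job.
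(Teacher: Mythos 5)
The paper itself does not prove Theorem~\ref{thm:standard}: it only names the (Strong) Aggregating Algorithm and points to the cited references for both the upper bound and its optimality. Your upper-bound argument is precisely that standard AA analysis, and it is essentially correct: the mixture point $\bigl(\e^{-\eta g_t(0)},\e^{-\eta g_t(1)}\bigr)$ lies in the convex set $E_\eta(\Sigma_\lambda)$, its preimage lies in $\Sigma_\lambda$, so the very definition (\ref{eq:Sigma}) supplies a $\gamma_t$ with $\lambda(\gamma_t,\omega)\le g_t(\omega)$ for both outcomes (your closedness remark is only needed if some expert's loss is infinite, a degenerate case you should either exclude or treat by a limiting argument), and the inequality $\e^{-\eta L_t}\ge W_t$ telescopes to (\ref{eq:standard}).

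The optimality half, however, has a genuine gap. Your plan is to fix expert predictions witnessing the failure of convexity of $E_\eta(\Sigma_\lambda)$, let Reality maximize Learner's instantaneous loss, and let a ``strictly positive per-round excess over the best expert'' accumulate linearly until it swamps $(\ln N)/\eta$. But non-$\eta$-mixability does not preclude $\eta'$-mixability for some $\eta'<\eta$ (take the square loss and $\eta=3$): in that case the AA run with learning rate $\eta'$ keeps the regret against every expert bounded by $(\ln N)/\eta'$ for all time, so no adversary can force a linearly growing excess over the best expert, and your accumulation step cannot go through. The per-round deficit you identify is a deficit relative to the aggregated (mixture) benchmark, not relative to the cumulative loss of the best single expert, and it does not telescope into a regret lower bound; what must be shown is that the regret exceeds the specific constant $(\ln N)/\eta$ at some finite time. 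That is exactly the delicate part of the cited proofs: Vovk (1998) and Haussler--Kivinen--Warmuth (1998) handle it with randomized Reality (e.g., i.i.d.\ outcomes with bias chosen at the point where convexity fails) and asymptotics in a growing number of experts, while the fixed-$N$ case needs the still more careful arguments of Vovk (1999) and the arXiv:0708.1503 report. A greedy deterministic Reality maximizing the instantaneous loss is in general not the right adversary either, since it controls Learner's loss but not the gap to the best expert; the standard constructions randomize precisely for this reason. So the upper bound matches the approach the paper relies on, but the lower bound as sketched would fail and needs the substantially heavier machinery of the references.
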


For the proof and other details, 
see \cite{cesabianchi/lugosi:2006}, \cite{haussler/etal:1998},
\cite{vovk:1998game}, or \cite[Theorem~8]{vovk:1999derandomizing};
one of the algorithms guaranteeing (\ref{eq:standard})
is the (Strong) Aggregating Algorithm (AA).
As shown in~\cite{chernov/etal:2008supermartingales},
one can take the defensive forecasting algorithm
instead of the AA in the theorem.

\section{Proper scoring rules}
\label{sec:PSR}

A loss function $\lambda$ is a \emph{proper scoring rule}
if for any $\pi,\pi'\in[0,1]$ it holds that
$$
  \pi\lambda(\pi,1)+(1-\pi)\lambda(\pi,0)
  \le
  \pi\lambda(\pi',1)+(1-\pi)\lambda(\pi',0);
$$
it is a \emph{strictly proper scoring rule}
if the inequality holds with $<$ in place of $\le$ whenever $\pi'\ne\pi$.
The interpretation is that the prediction $\pi$
is an estimate of the probability that $\omega=1$.
The definition says that the expected loss
with respect to a probability distribution is minimal
if the prediction is the true probability of $1$.
Informally, a strictly proper scoring rule
encourages a forecaster (Learner or one of the experts)
to announce his true subjective probability that the next outcome is $1$.
(See \cite{dawid:1986}, \cite{gneiting/raftery:2007},
and \cite{buja/etal:2005} for detailed reviews.)

Simple examples of strictly proper scoring rules
are provided by two most common loss functions:
the log loss function
$$
  \lambda(\gamma,\omega)
  :=
  -\ln(\omega\gamma+(1-\omega)(1-\gamma))
$$
(i.e., $\lambda(\gamma,0)=-\ln(1-\gamma)$ and $\lambda(\gamma,1)=-\ln\gamma$)
and the square loss function
$$
  \lambda(\gamma,\omega)
  :=
  (\omega-\gamma)^2\,.
$$
A trivial but important for us generalization of the log loss function is
\begin{equation}\label{eq:generalized-log}
  \lambda(\gamma,\omega)
  :=
  -\frac{1}{\eta}\ln(\omega\gamma+(1-\omega)(1-\gamma)),
\end{equation}
where $\eta$ is a positive constant.
The generalized log loss function is also a proper scoring rule
(in general, multiplying a proper scoring rule by a positive constant
we again obtain a proper scoring rule).

We will often say ``(strictly) proper loss function''
meaning a loss function that is a (strictly) proper scoring rule.
Our main interest will be in loss functions that are both mixable and proper.
Let $\LLL$ be the set of all such loss functions.


\section{Prediction with expert evaluators' advice}
\label{sec:PEAA}

In this section we consider a very general protocol
of prediction with expert advice.
The intuition behind special cases of this protocol will be discussed
in the following sections.

\bigskip

\noindent
\mbox{\textsc{Prediction with expert evaluators' advice}}\nopagebreak

\smallskip

\parshape=6
\IndentI   \WidthI
\IndentII  \WidthII
\IndentIII \WidthIII
\IndentII  \WidthII
\IndentII  \WidthII
\IndentI   \WidthI
\noindent
FOR $t=1,2,\dots$:\\
  Expert $n$ announces $\gamma_t^n\in[0,1]$, $\eta_t^n>0$,
    and $\eta_t^n$-mixable $\lambda_t^n\in\LLL$,\\
    $n=1,\ldots,N$.\\
  Learner announces $\gamma_t\in[0,1]$.\\
  Reality announces $\omega_t\in\{0,1\}$.\\
END FOR

\bigskip

The main mathematical result of this paper is the following.
\begin{theorem}\label{thm:main}
  Learner has a strategy
  (e.g., the defensive forecasting algorithm described below)
  that guarantees that in the game of prediction
  with $N$ expert evaluators' advice
  it holds,
  for all $T$
  and for all $n=1,\ldots,N$,
  that
  $$
    \sum_{t=1}^T
    \eta^n_t
    \bigl(
      \lambda^n_t(\pi_t,\omega_t)
      -
      \lambda^n_t(\gamma^n_t,\omega_t)
    \bigr)
    \le
    \ln N.
  $$
\end{theorem}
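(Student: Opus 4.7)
The plan is to construct a choice of Learner's prediction $\pi_t$ at each step so that the aggregate
\begin{equation*}
  S_t:=\sum_{n=1}^{N}S^n_t,
  \qquad
  S^n_t:=\exp\Biggl(\sum_{s=1}^{t}\eta^n_s\bigl(\lambda^n_s(\pi_s,\omega_s)-\lambda^n_s(\gamma^n_s,\omega_s)\bigr)\Biggr),
  \qquad S^n_0:=1,
\end{equation*}
is non-increasing in $t$ no matter which $\omega_t\in\{0,1\}$ Reality picks. Since $S_0=N$ and $S^n_t\le S_t$, this immediately yields $S^n_T\le N$ for all $T$ and $n$, which upon taking logarithms is exactly the bound claimed in the theorem.

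The key analytic lemma to be proved is the pointwise inequality
\begin{equation*}
  \pi\,\e^{\eta(\lambda(\pi,1)-\lambda(\gamma,1))}
  +
  (1-\pi)\,\e^{\eta(\lambda(\pi,0)-\lambda(\gamma,0))}
  \le 1
  \qquad(*)
\end{equation*}
for every proper $\eta$-mixable $\lambda\in\LLL$ and every $\pi,\gamma\in[0,1]$. To establish $(*)$ I would set $\phi(x,\omega):=\e^{-\eta\lambda(x,\omega)}$ and view the curve $x\mapsto(\phi(x,0),\phi(x,1))$ as sitting on the upper-right boundary of $E_\eta(\Sigma_\lambda)$, which is convex by $\eta$-mixability. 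Properness supplies the first-order condition $\pi\lambda'(\pi,1)+(1-\pi)\lambda'(\pi,0)=0$, which after translating through $\phi$ pins down the outward normal to the curve at $x=\pi$ as proportional to $((1-\pi)/\phi(\pi,0),\pi/\phi(\pi,1))$. Convexity then turns this into the supporting-hyperplane inequality $(1-\pi)\phi(\gamma,0)/\phi(\pi,0)+\pi\phi(\gamma,1)/\phi(\pi,1)\le 1$, which is precisely $(*)$. Corner cases where $\lambda$ takes infinite values (so $\phi=0$) or is not differentiable would be absorbed by a limiting argument inside the compact set $[0,1]^2$.

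With $(*)$ in hand, Learner's move is produced by an intermediate-value argument. Define
\begin{equation*}
  T_t(\pi,\omega):=\sum_{n=1}^{N}S^n_{t-1}\,\e^{\eta^n_t(\lambda^n_t(\pi,\omega)-\lambda^n_t(\gamma^n_t,\omega))},
\end{equation*}
so that $S_t=T_t(\pi_t,\omega_t)$. Applying $(*)$ to each expert and summing with weights $S^n_{t-1}$ yields $\pi T_t(\pi,1)+(1-\pi)T_t(\pi,0)\le S_{t-1}$ for every $\pi\in[0,1]$. By Assumption~1, $T_t(\cdot,\omega)$ is continuous into $[0,\infty]$; and because each $\lambda^n_t$ is proper, $\lambda^n_t(\cdot,1)$ is non-increasing and $\lambda^n_t(\cdot,0)$ non-decreasing in its first argument, so $T_t(0,1)\ge T_t(0,0)$ while $T_t(1,1)\le T_t(1,0)$. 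The intermediate value theorem (in the Aleksandrov-topology form, to handle possibly infinite endpoint values) applied to $\pi\mapsto T_t(\pi,1)-T_t(\pi,0)$ furnishes $\pi^*\in[0,1]$ with $T_t(\pi^*,0)=T_t(\pi^*,1)=:c$, and the weighted inequality then forces $c\le S_{t-1}$. Choosing $\pi_t:=\pi^*$ makes $S_t\le S_{t-1}$ regardless of $\omega_t$, and induction on $t$ completes the proof.

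The main obstacle is clearly $(*)$: this is the one step where $\eta$-mixability and properness must be combined in a single geometric picture, and it has to be delivered uniformly in $\lambda$, $\eta$, and $\gamma$, and also through the corner cases (infinite losses, non-smooth scoring rules) allowed by Assumptions~1--3. Everything after $(*)$ — the IVT construction of $\pi_t$ and the induction on $t$ — is essentially bookkeeping, and the monotonicity of proper scoring rules used in the IVT step is a standard one-line consequence of the Savage-type representation.
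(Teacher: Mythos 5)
Your overall architecture coincides with the paper's: your $S_t$ is (up to the constant factor $1/N$) the paper's mixed process $Q$, your inequality $(*)$ is exactly the per-step supermartingale property of each $Q^n$ (the paper's Lemma~3), your intermediate-value selection of $\pi_t$ is the paper's Lemma~1, and the bookkeeping from ``$S_t$ non-increasing'' to the stated bound is identical. The one place you diverge is the proof of $(*)$, and that is where there is a genuine gap. You obtain the outward normal of $E_\eta(\Sigma_\lambda)$ at the point $(\e^{-\eta\lambda(\pi,0)},\e^{-\eta\lambda(\pi,1)})$ from the first-order condition $\pi\lambda'(\pi,1)+(1-\pi)\lambda'(\pi,0)=0$, but the class $\LLL$ contains proper mixable loss functions that are not differentiable: the paper stresses that properness need not be strict, so the prediction set may have corners. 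At a corner there is no first-order condition and a whole cone of supporting lines, and the entire difficulty is to show that the specific line with normal proportional to $\bigl((1-\pi)\e^{\eta\lambda(\pi,0)},\,\pi\e^{\eta\lambda(\pi,1)}\bigr)$ is one of them. Deferring this to ``a limiting argument'' is not routine: you would need to approximate an arbitrary proper $\eta$-mixable $\lambda$ by smooth ones while preserving both properness and $\eta$-mixability, which you neither carry out nor is obviously arrangeable.

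The gap can be closed without any smoothness by using properness as the global inequality it is, which is what the paper's Lemma~2 does: if $(1-\pi)\e^{-\eta(\lambda(\gamma,0)-\lambda(\pi,0))}+\pi\e^{-\eta(\lambda(\gamma,1)-\lambda(\pi,1))}>1$ for some $\gamma$, then for small $u>0$ mixability yields a prediction $\delta$ with $\e^{-\eta\lambda(\delta,\omega)}\ge u\,\e^{-\eta\lambda(\gamma,\omega)}+(1-u)\,\e^{-\eta\lambda(\pi,\omega)}$ for both $\omega$, and a first-order expansion in $u$ gives $(1-\pi)\lambda(\delta,0)+\pi\lambda(\delta,1)<(1-\pi)\lambda(\pi,0)+\pi\lambda(\pi,1)$, contradicting properness at $\pi$; this is the analytic form of the paper's geometric argument with the shifted curve through $\Lambda_\pi$ and its tangent. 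The remainder of your proposal is fine and matches the paper: the monotonicity of proper losses used at the endpoints is correct (and could even be bypassed, as the paper does, by reading off $f(0,0)\le0$ and $f(1,1)\le0$ from the supermartingale inequality), and the infinite-value worry is confined to $\pi\in\{0,1\}$ since properness together with Assumption~2 forces both losses to be finite for $\pi\in(0,1)$.
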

\noindent
The description of the defensive forecasting algorithm
and the proof of the theorem
will be given in Section \ref{sec:proof}.
\begin{corollary}\label{cor:main}
  For any $\eta>0$,
  Learner has a strategy that guarantees
  \begin{equation}\label{eq:guarantee}
    \sum_{t=1}^T
    \lambda^n_t(\pi_t,\omega_t)
    \le
    \sum_{t=1}^T
    \lambda^n_t(\gamma^n_t,\omega_t)
    +
    \frac{\ln N}{\eta},
  \end{equation}
  for all $T$ and all $n=1,\ldots,N$,
  in the game of prediction with $N$ expert evaluators' advice
  in which the experts are required to always choose
  $\eta$-mixable loss functions $\lambda^n_t$.
\end{corollary}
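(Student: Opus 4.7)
The plan is to derive this corollary as a direct specialization of Theorem~\ref{thm:main}. In the restricted game described in the corollary, every expert's loss function $\lambda^n_t$ is $\eta$-mixable, so a legitimate play for Expert~$n$ within the general protocol of Section~\ref{sec:PEAA} is to announce $\eta^n_t := \eta$ along with $\gamma^n_t$ and $\lambda^n_t$ at each step $t$. Hence Learner may use the defensive forecasting algorithm from Theorem~\ref{thm:main} verbatim against these plays.

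Applying Theorem~\ref{thm:main} with the uniform choice $\eta^n_t = \eta$ yields, for all $T$ and all $n=1,\ldots,N$,
\begin{equation*}
  \sum_{t=1}^{T}
  \eta
  \bigl(
    \lambda^n_t(\pi_t,\omega_t) - \lambda^n_t(\gamma^n_t,\omega_t)
  \bigr)
  \le \ln N.
\end{equation*}
Since $\eta$ is a positive constant independent of $t$ and $n$, I would pull it out of the sum and divide both sides by $\eta$, obtaining $\sum_{t=1}^{T}\bigl(\lambda^n_t(\pi_t,\omega_t) - \lambda^n_t(\gamma^n_t,\omega_t)\bigr) \le (\ln N)/\eta$. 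Moving the subtracted sum to the right-hand side gives inequality~(\ref{eq:guarantee}).

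There is essentially no technical obstacle: the corollary is a one-line consequence of the theorem once one observes that uniform $\eta$-mixability of the $\lambda^n_t$ lets us fix $\eta^n_t = \eta$ for every $t$ and $n$. The only point worth being careful about is the legality of this choice within the general protocol, which is immediate from the definition of $\eta$-mixability, so that the strategy guaranteed by Theorem~\ref{thm:main} works without any modification.
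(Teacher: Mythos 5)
Your derivation is correct and is exactly the intended (unstated) argument: the paper presents Corollary~\ref{cor:main} as an immediate specialization of Theorem~\ref{thm:main} obtained by taking $\eta^n_t=\eta$ for all $n$ and $t$ (legal because the $\lambda^n_t$ are $\eta$-mixable) and dividing the resulting bound by $\eta$. Nothing is missing.
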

\noindent
This corollary is more intuitive than Theorem \ref{thm:main}
as (\ref{eq:guarantee}) compares the cumulative losses suffered by Learner
and each expert.

In the following sections we will discuss
two interesting special cases
of Theorem \ref{thm:main} and Corollary \ref{cor:main}.

\section{Prediction with constant expert evaluators' advice}
\label{sec:PCEAA}

In the game of this section,
as in the previous one,
the experts are ``expert evaluators'':
each of them measures Learner's and his own performance
using his own loss function,
supposed to be mixable and proper.
The difference is that now each expert is linked
to a fixed loss function.
The game is specified by $N$ loss functions
$\lambda^1,\ldots,\lambda^N$.

\bigskip

\noindent
\mbox{\textsc{Prediction with constant expert evaluators' advice}}\nopagebreak

\smallskip

\parshape=9
\IndentI  \WidthI
\IndentI  \WidthI
\IndentI  \WidthI
\IndentII \WidthII
\IndentII \WidthII
\IndentII \WidthII
\IndentII \WidthII
\IndentII \WidthII
\IndentI  \WidthI
\noindent
$L_0^{(n)}:=0$, $n=1,\ldots,N$.\\
$L_0^n:=0$, $n=1,\ldots,N$.\\
FOR $t=1,2,\dots$:\\
  Expert $n$ announces $\gamma_t^n\in[0,1]$, $n=1,\ldots,N$.\\
  Learner announces $\gamma_t\in[0,1]$.\\
  Reality announces $\omega_t\in\{0,1\}$.\\
  $L_t^{(n)}:=L_{t-1}^{(n)}+\lambda^n(\gamma_t,\omega_t)$,
    $n=1,\ldots,N$.\\
  $L_t^n:=L_{t-1}^n+\lambda^n(\gamma_t^n,\omega_t)$,
    $n=1,\ldots,N$.\\
END FOR

\bigskip

There are two changes in the protocol
as compared to the basic protocol of prediction with expert advice
in Section \ref{sec:PEA}.
The accumulated loss $L_t^n$ of each expert is now calculated
according to his own loss function~$\lambda^n$.
For Learner,
there is no single accumulated loss anymore.
Instead, the loss $L_t^{(n)}$ of Learner
is calculated  separately against each expert, 
according to that expert's loss function~$\lambda^n$.
Informally speaking, each expert evaluates his own performance
and the performance of Learner
according to the expert's own (but publicly known) criteria.

In the standard setting of prediction with expert advice
it is often said that Learner's goal 
is to compete with the best expert in the pool.
In the new setting, 
we cannot speak about the best expert: 
the experts' performance is evaluated by different loss functions 
and thus the losses may be measured on different scales.
But it still makes sense to consider bounds
on the \emph{regret} $L_t^{(n)}-L_t^n$ for each~$n$.

Theorem \ref{thm:main} (or Corollary \ref{cor:main})
immediately implies the following performance guarantee
for the defensive forecasting algorithm in our current setting.
\begin{corollary}\label{cor:multbound}
  Suppose that every $\lambda^n$
  is a proper loss function that is $\eta^n$-mixable for some $\eta^n>0$,
  $n=1,\ldots,N$.
  Then Learner has a strategy
  (such as the defensive forecasting algorithm)
  that guarantees that in the game of prediction
  with $N$ experts' advice and loss functions $\lambda^1,\ldots,\lambda^N$
  it holds,
  for all $T$
  and for all $n=1,\ldots,N$,
  that
  \begin{equation}\label{eq:multbound}
    L^{(n)}_T
    \le
    L_T^n + \frac{\ln N}{\eta^n}.
  \end{equation}
\end{corollary}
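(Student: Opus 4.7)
My plan is to deduce Corollary \ref{cor:multbound} as a direct specialization of Theorem \ref{thm:main}, with essentially no extra work. The protocol of Section \ref{sec:PCEAA} is precisely the restriction of the protocol of Section \ref{sec:PEAA} in which each expert $n$ always announces the same loss function $\lambda^n_t := \lambda^n$ and the same mixability parameter $\eta^n_t := \eta^n$; these announcements are legitimate under the hypotheses of the corollary, since by assumption $\lambda^n \in \LLL$ is $\eta^n$-mixable. Accordingly, Learner can simulate the general game by treating every expert as if it reannounced its fixed pair $(\lambda^n, \eta^n)$ together with $\gamma^n_t$ at each round, and can then run the defensive forecasting algorithm from Section \ref{sec:proof} verbatim.

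Substituting these constant choices into the conclusion of Theorem \ref{thm:main}, we obtain, for every $T$ and every $n \in \{1, \ldots, N\}$,
\[
  \sum_{t=1}^T \eta^n \bigl( \lambda^n(\gamma_t,\omega_t) - \lambda^n(\gamma^n_t,\omega_t) \bigr) \le \ln N,
\]
where $\gamma_t$ is Learner's prediction. Since $\eta^n > 0$ does not depend on $t$, it factors out of the sum; dividing by $\eta^n$ and recognising the two remaining sums on the left as $L^{(n)}_T$ and $L^n_T$ (by the update rules in the protocol of Section \ref{sec:PCEAA}) delivers exactly the claimed inequality $L^{(n)}_T \le L^n_T + (\ln N)/\eta^n$.

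There is no substantive obstacle; the real content is entirely in Theorem \ref{thm:main}, and the corollary is a book-keeping consequence. The one point worth flagging is why I appeal to the theorem rather than to Corollary \ref{cor:main}: the latter asks for a single common mixability constant $\eta$ shared by all experts, whereas Corollary \ref{cor:multbound} permits expert-specific $\eta^n$. The per-expert weighting by $\eta^n_t$ inside the sum in Theorem \ref{thm:main} is exactly the flexibility needed to absorb this and to produce the sharp factor $1/\eta^n$ on the right-hand side of \eqref{eq:multbound}.
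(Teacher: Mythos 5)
Your proposal is correct and matches the paper's route: the paper likewise obtains this corollary as an immediate specialization of Theorem \ref{thm:main} (with constant $\lambda^n_t=\lambda^n$, $\eta^n_t=\eta^n$), factoring out $\eta^n$ and dividing. Your remark about preferring the theorem over Corollary \ref{cor:main} to retain the expert-specific $\eta^n$ is also consistent with the paper's presentation.
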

\noindent
The new bound (\ref{eq:multbound}) is precisely the same
as the bound for the standard setting of Theorem~\ref{thm:standard}.
But rigorous comparison of the actual power of these two bounds
is not so trivial.

Formally speaking, the task of Learner in the new protocol
is not strictly harder and is not strictly easier than in the standard protocol:
the task is incomparable.
Learner must now compete with different experts by different rules.
But this is not necessarily a disadvantage.
Consider an example.
Suppose that all experts except one 
are linked to one loss function and 
the last expert is linked to another loss function.
And this last loss function is somehow trivial, 
say, equals $1$ independent of the outcome and the prediction.
Then we arrive at the standard protocol with $N-1$ experts,
since the regret against the last expert 
is zero independent of our predictions.
In this example, we can get a better bound
than that given by Corollary \ref{cor:multbound}.
This non-optimality is especially apparent in the case
when we have a huge number of experts,
but all except one are linked to a trivial loss function.
Then our regret bound is large,
being a logarithm of a huge number, 
whereas one can achieve zero regret against all experts
whatever strategy they use---%
since the loss functions are unfavourable to the experts.

Nevertheless, it is intuitively clear that the new protocol
is somewhat harder for Learner in general.
And Corollary~\ref{cor:multbound} is really surprising:
it is hard to believe that Learner can compete
against several arbitrary loss functions
as well as against only one of them.
The reason why this is possible is that the loss functions
are assumed to be proper.

\subsection*{Multiobjective prediction with expert advice}

To conclude this section,
let us consider another variant of the protocol 
with several loss functions.
As mentioned in the introduction,
sometimes we have experts' predictions,
and we are not given a single loss function,
but have several possible candidates.
The most cautious way to generate Learner's predictions
is to ensure that the regret is small
against all experts and according to all loss functions.
The following protocol formalizes this task.
Now we have $N$ experts and $M$ loss functions 
$\lambda^1,\ldots,\lambda^M$.

\bigskip

\noindent
\mbox{\textsc{Multiobjective prediction with expert advice}}\nopagebreak

\smallskip

\parshape=9
\IndentI  \WidthI
\IndentI  \WidthI
\IndentI  \WidthI
\IndentII \WidthII
\IndentII \WidthII
\IndentII \WidthII
\IndentII \WidthII
\IndentII \WidthII
\IndentI  \WidthI
\noindent
$L_0^{(m)}:=0$, $m=1,\ldots,M$.\\
$L_0^{n,m}:=0$, $n=1,\ldots,N$ and $m=1,\ldots,M$.\\
FOR $t=1,2,\dots$:\\
  Expert $n$ announces $\gamma_t^n\in[0,1]$, $n=1,\ldots,N$.\\
  Learner announces $\gamma_t\in[0,1]$.\\
  Reality announces $\omega_t\in\{0,1\}$.\\
  $L_t^{(m)}:=L_{t-1}^{(m)}+\lambda^m(\gamma_t,\omega_t)$,
    $m=1,\ldots,M$.\\
  $L_t^{n,m}:=L_{t-1}^{n,m}+\lambda^m(\gamma_t^n,\omega_t)$,
    $n=1,\ldots,N$ and $m=1,\ldots,M$.\\
END FOR

\bigskip

\begin{corollary}\label{cor:multiobjective}
  Suppose that every $\lambda^m$ is an $\eta^m$-mixable proper loss function,
  for some $\eta^m>0$, $m=1,\ldots,M$.
  The defensive forecasting algorithm guarantees that,
  in the multiobjective game of prediction
  with $N$ experts and the loss functions $\lambda^1,\ldots,\lambda^M$,
  \begin{equation}\label{eq:multiobjective}
    L^{(m)}_t\le L_t^{n,m} + \frac{\ln MN}{\eta^m}
  \end{equation}
  for all $t$, all $n=1,\ldots,N$, and all $m=1,\ldots,M$.
\end{corollary}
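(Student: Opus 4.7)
The plan is to reduce Corollary \ref{cor:multiobjective} to Theorem \ref{thm:main} by an expansion trick: since Theorem \ref{thm:main} already accommodates expert evaluators with different loss functions and different mixability parameters, I will encode each (expert, loss function) pair from the multiobjective protocol as a separate expert evaluator in the more general protocol of Section \ref{sec:PEAA}.

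Concretely, I would introduce $NM$ virtual expert evaluators indexed by pairs $(n,m)$ with $1\le n\le N$ and $1\le m\le M$. At each step $t$, virtual evaluator $(n,m)$ announces the prediction $\gamma_t^n$ supplied by Expert $n$, uses the constant loss function $\lambda_t^{(n,m)}:=\lambda^m$, and uses the constant mixability parameter $\eta_t^{(n,m)}:=\eta^m$. By assumption each $\lambda^m$ is $\eta^m$-mixable and proper, so each virtual evaluator satisfies the requirements of the Prediction with Expert Evaluators' Advice protocol, and Learner's prediction $\gamma_t=\pi_t$ can be produced by the defensive forecasting algorithm run on this expanded pool.

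Applying Theorem \ref{thm:main} to this pool of $NM$ evaluators yields, for every $T$ and every pair $(n,m)$,
\[
  \sum_{t=1}^T \eta^m\bigl(\lambda^m(\pi_t,\omega_t)-\lambda^m(\gamma_t^n,\omega_t)\bigr)
  \le
  \ln(NM).
\]
Dividing both sides by the positive constant $\eta^m$ and recognizing the left-hand side as $\eta^m\bigl(L_T^{(m)}-L_T^{n,m}\bigr)$ gives exactly the desired bound (\ref{eq:multiobjective}).

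I do not expect any real obstacle: the only thing to check is that the reduction is legitimate, i.e.\ that the virtual evaluators are allowed in the protocol of Section \ref{sec:PEAA} (they are, since $\lambda^m\in\LLL$ is mixable and proper, and the protocol permits each evaluator to use its own constant $\eta$ and its own constant $\lambda$), and that running defensive forecasting on $NM$ evaluators produces a legitimate Learner strategy in the original multiobjective game (it does, because the virtual evaluators depend only on data available in the multiobjective game). The $\ln(NM)$ factor, rather than $\ln N$, is the price paid for having to compete simultaneously against all $NM$ combinations.
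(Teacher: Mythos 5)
Your proposal is correct and is essentially the paper's own argument: the paper likewise constructs the $MN$ virtual experts $(n,m)$, each predicting as Expert $n$ and linked to $\lambda^m$, and merely routes the reduction through Corollary~\ref{cor:multbound} (the constant-evaluator special case of Theorem~\ref{thm:main}) rather than invoking Theorem~\ref{thm:main} directly. The only nitpick is the phrasing at the end: you should divide by $\eta^m$ \emph{after} identifying the left-hand side as $\eta^m\bigl(L_T^{(m)}-L_T^{n,m}\bigr)$, but this is purely a matter of wording.
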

\begin{proof}
  This follows easily from Corollary \ref{cor:multbound}.
  For each $n\in\{1,\ldots,N\}$, let us construct $M$ new experts $(n,m)$.
  Expert $(n,m)$ predicts as Expert $n$
  and is linked to the loss function $\lambda^m$.
  Applying Corollary \ref{cor:multbound} to these $MN$ experts,
  we get bound (\ref{eq:multiobjective}).
\end{proof}

The last protocol is harder for Learner
than the standard protocol when $M>1$:
Learner must satisfy all old regret bounds and also some new bounds.
But the increase in the regret bounds is surprisingly small:
only an additive term proportional to $\ln M$.
Whether the dependence on $M$ in Corollary \ref{cor:multiobjective} is optimal
remains an open problem.

A further generalization of our last protocol
involves a binary relation $R$
between the $N$ experts and the $M$ loss functions,
where $nRm$, $n\in\{1,\ldots,N\}$ and $m\in\{1,\ldots,M\}$,
is interpreted as Expert $n$
using the loss function $\lambda^m$
when evaluating Learner's and his own performance.
It is assumed that for each $n$ there exists at least one $m$
such that $nRm$.
The relation $R$ is naturally represented as a bipartite graph
connecting the vertices in the set $\{1,\ldots,N\}$
to vertices in the set $\{1,\ldots,M\}$.
Equation (\ref{eq:multiobjective}) now becomes
\begin{equation*}
  L^{(m)}_t\le L_t^{n,m} + \frac{\ln K}{\eta^m},
\end{equation*}
for all $(n,m)\in R$,
where $K$ is the cardinality of $R$
(equivalently, the number of edges in the bipartite graph).

\subsection*{A simple example}

Let $\lambda^1$ be the log loss function
and $\lambda^2$ the square loss function.
As already mentioned,
both loss functions are proper and mixable.
It is known
(see, e.g., \cite{cesabianchi/lugosi:2006},
\cite{haussler/etal:1998}, or \cite{vovk:1990})
that $\lambda^1$ is $1$-mixable and $\lambda^2$ is $2$-mixable.
%
Suppose we are competing with $N$ experts
producing predictions $\gamma^n_t$
under these two loss functions.
The defensive forecasting algorithm ensures that the regret
with respect to the logarithmic loss function
is bounded by $\ln(2N)<\ln N + 0.7$,
and the regret with respect to the square loss function
is bounded by $0.5\ln(2N)<0.5\ln N + 0.4$---%
practically the same as the regrets against $N$ experts
that are achievable when Learner chooses his predictions
with respect to one of the loss functions only.

\section{Prediction with specialist experts' advice}
\label{sec:sleeping}

The experts of this section are allowed to ``sleep'',
i.e.,
abstain from giving advice to Learner at some steps.
This generalization is important for text-processing applications
(see, e.g., \cite{cohen/singer:1999}).
We will be assuming that there is only one loss function $\lambda$,
although generalization to the case of $N$ loss functions
$\lambda^1,\ldots,\lambda^N$
is straightforward.
The loss function $\lambda$ does not need to be proper
(but it is still required to be mixable).

Let $a$ be any object that does not belong to $[0,1]$;
intuitively, it will stand for an expert's decision to abstain.

\bigskip

\noindent
\mbox{\textsc{Prediction with specialist experts' advice}}\nopagebreak

\smallskip

\parshape=9
\IndentI  \WidthI
\IndentI  \WidthI
\IndentI  \WidthI
\IndentII \WidthII
\IndentII \WidthII
\IndentII \WidthII
\IndentII \WidthII
\IndentII \WidthII
\IndentI  \WidthI
\noindent
$L_0^{(n)}:=0$, $n=1,\ldots,N$.\\
$L_0^n:=0$, $n=1,\ldots,N$.\\
FOR $t=1,2,\dots$:\\
  Expert $n$ announces $\gamma_t^n\in([0,1]\cup\{a\})$, $n=1,\ldots,N$.\\
  Learner announces $\gamma_t\in[0,1]$.\\
  Reality announces $\omega_t\in\{0,1\}$.\\
  $L_t^{(n)}:=L_{t-1}^{(n)}+\III_{\{\gamma_t^n\ne a\}}\lambda(\gamma_t,\omega_t)$,
    $n=1,\ldots,N$.\\
  $L_t^n:=L_{t-1}^n+\III_{\{\gamma_t^n\ne a\}}\lambda(\gamma_t^n,\omega_t)$,
    $n=1,\ldots,N$.\\
END FOR

\bigskip

\noindent
The indicator function $\III_{\{\gamma_t^n\ne a\}}$ of the event $\gamma_t^n\ne a$
is defined to be 1 if $\gamma_t^n\ne a$ and 0 if $\gamma_t^n=a$.
Therefore, $L_t^{(n)}$ and $L_t^{n}$
refer to the cumulative loss of Learner and Expert $n$
over the steps when Expert $n$ is awake.
Now Learner's goal is to do as well as each expert
on the steps chosen by that expert.

\begin{corollary}\label{cor:sleeping}
  Let $\lambda$ be a loss function
  that is $\eta$-mixable for some $\eta>0$.
  Then Learner has a strategy
  (e.g., the defensive forecasting algorithm)
  that guarantees that in the game of prediction
  with $N$ specialist experts' advice and loss function $\lambda$
  it holds,
  for all $T$
  and for all $n=1,\ldots,N$,
  that
  \begin{equation}\label{eq:sleeping}
    L^{(n)}_T
    \le
    L_T^n + \frac{\ln N}{\eta}.
  \end{equation}
\end{corollary}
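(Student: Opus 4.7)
The plan is to reduce prediction with specialist experts to prediction with expert evaluators and then invoke Corollary~\ref{cor:main}. The trick is to turn each specialist into an expert evaluator whose loss function on the steps when the specialist sleeps is trivial, so that those steps drop out of the bound.

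Concretely, for each $n$ and each $t$ I would define the $n$-th expert evaluator as follows. If $\gamma^n_t\ne a$, it announces the prediction $\gamma^n_t$, the weight $\eta^n_t:=\eta$, and the loss function $\lambda^n_t:=\lambda$. If $\gamma^n_t=a$, it announces an arbitrary prediction in $[0,1]$, the same $\eta^n_t:=\eta$, and the constant loss $\lambda^n_t\equiv 0$; this constant loss is trivially a proper scoring rule and is $\eta$-mixable because its superprediction set $[0,\infty)^2$ has convex image $(0,1]^2$ under $E_\eta$, so $\lambda^n_t\in\LLL$. Corollary~\ref{cor:main} then yields
\[
  \sum_{t=1}^T\lambda^n_t(\gamma_t,\omega_t)
  \le
  \sum_{t=1}^T\lambda^n_t(\gamma^n_t,\omega_t)+\frac{\ln N}{\eta}
\]
for every $n$. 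The contributions of the sleeping steps vanish on both sides, while the contributions of the awake steps are precisely the increments of $L^{(n)}$ and $L^n$ respectively, so the displayed inequality is exactly~(\ref{eq:sleeping}).

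The main obstacle is that Corollary~\ref{cor:main} requires $\lambda^n_t\in\LLL$, i.e., each loss function must be \emph{proper} as well as $\eta$-mixable, whereas Corollary~\ref{cor:sleeping} drops the propriety assumption on $\lambda$. To close this gap I would first pass to a canonical proper $\eta$-mixable companion $\lambda^{\ast}$ of $\lambda$, obtained by parameterizing the lower-left boundary of the convex set $E_\eta(\Sigma_\lambda)$ as a proper scoring rule with the same superprediction set $\Sigma_{\lambda^{\ast}}=\Sigma_\lambda$. Since the two superprediction sets coincide, every expert prediction $\gamma^n_t$ under $\lambda$ can be replaced by a prediction $\tilde\gamma^n_t$ under $\lambda^{\ast}$ with $\lambda^{\ast}(\tilde\gamma^n_t,\omega)\le\lambda(\gamma^n_t,\omega)$ for both outcomes, and the prediction produced by the defensive forecasting algorithm under $\lambda^{\ast}$ can in turn be translated back into a prediction under $\lambda$ with componentwise no larger losses. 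Applying the reduction of the previous paragraph to $\lambda^{\ast}$ and then translating in both directions absorbs the non-propriety and delivers~(\ref{eq:sleeping}) in full generality.
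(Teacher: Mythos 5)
Your proposal is correct and takes essentially the same route as the paper: the paper's own proof also reduces the specialist protocol to the expert-evaluators protocol of Section~\ref{sec:PEAA} by letting a sleeping expert announce the identically-zero loss function (with $\eta^n_t=\eta$), and it disposes of the propriety issue by the same ``without loss of generality $\lambda$ is proper, via reparameterization of the predictions'' step that you merely spell out in more detail (passing to the proper loss with the same superprediction set and translating predictions back and forth).
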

\begin{proof}
  Without loss of generality the loss function $\lambda$
  may be assumed to be proper
  (this can be achieved by reparameterization of the predictions
  $\gamma\in[0,1]$).
  The protocol of this section
  then becomes a special case of the protocol of Section \ref{sec:PEAA}
  in which at each step each expert outputs $\eta_t^n=\eta$
  and either $\lambda_t^n=\lambda$ (when he is awake)
  or $\lambda_t^n=0$ (when he is asleep).
  (Alternatively,
  in which at each step each expert outputs $\lambda_t^n=\lambda$
  and either $\eta_t^n=\eta$, when he is awake,
  or $\eta_t^n=0$, when he is asleep.)

\end{proof}

\section{Defensive forecasting algorithm
  and the proof of Theorem \ref{thm:main}}
\label{sec:proof}

In this section we prove Theorem~\ref{thm:main}.
Our proof is constructive:
we explicitly describe the defensive forecasting algorithm
achieving the bound in Theorem~\ref{thm:main}.

\subsection*{The algorithm}

For each $n=1,\ldots,N$, 
let us define the function
\begin{align}
  &Q^n:
  \left(
    [0,1]^N \times (0,\infty)^N \times \LLL^N \times [0,1] \times \{0,1\}
  \right)^*
  \to
  [0,\infty]
  \notag\\
  &Q^n
  \left(
    \gamma_1^{\bullet},\eta_1^{\bullet},\lambda_1^{\bullet},\pi_1,\omega_1,\ldots,
    \gamma_T^{\bullet},\eta_T^{\bullet},\lambda_T^{\bullet},\pi_T,\omega_T
  \right)
  :=
  \prod_{t=1}^T
  \e^{
    \eta^n_t \bigl(
      \lambda^n_t(\pi_t,\omega_t)
      -
      \lambda^n_t(\gamma^n_t,\omega_t)
    \bigr)
  },
  \label{eq:Q-n}
\end{align}
where $\gamma^n_t$ are the components of $\gamma^{\bullet}_t$,
$\eta^n_t$ are the components of $\eta^{\bullet}_t$,
and $\lambda^n_t$ are the components of $\lambda^{\bullet}_t$:
\begin{align*}
  \gamma^{\bullet}_t &:= (\gamma_t^1,\ldots,\gamma_t^N),\\
  \eta^{\bullet}_t &:= (\eta_t^1,\ldots,\eta_t^N),\\
  \lambda^{\bullet}_t &:= (\lambda_t^1,\ldots,\lambda_t^N).
\end{align*}
As usual, the product $\prod_{t=1}^0$ is interpreted as $1$,
so that $Q^n()=1$.
The functions $Q^n$ will usually be applied to
$\gamma^{\bullet}_t:=(\gamma_t^1,\ldots,\gamma_t^N)$
the predictions made by all the $N$ experts at step $t$,
$\eta^{\bullet}_t:=(\eta_t^1,\ldots,\eta_t^N)$
the learning rates chosen by the experts at step $t$,
and $\lambda^{\bullet}_t:=(\lambda_t^1,\ldots,\lambda_t^N)$
the loss functions used by the experts at step $t$.
Notice that $Q^n$ does not depend on the predictions,
learning rates, and loss functions
of the experts other than Expert $n$.

Set
$$
  Q
  :=
  \frac{1}{N} \sum_{n=1}^N Q^n
$$
and
\begin{multline}\label{eq:f}
  f_t(\pi,\omega)
  :=\\
  Q
  \left(
    \gamma^{\bullet}_1,\eta^{\bullet}_1,\lambda^{\bullet}_1,
    \pi_1,\omega_1,\ldots,
    \gamma^{\bullet}_{t-1},\eta^{\bullet}_{t-1},\lambda^{\bullet}_{t-1},
    \pi_{t-1},\omega_{t-1},
    \gamma^{\bullet}_t,\eta^{\bullet}_t,\lambda^{\bullet}_t,
    \pi,\omega
  \right)\\
  -
  Q
  \left(
    \gamma^{\bullet}_1,\eta^{\bullet}_1,\lambda^{\bullet}_1,
    \pi_1,\omega_1,\ldots,
    \gamma^{\bullet}_{t-1},\eta^{\bullet}_{t-1},\lambda^{\bullet}_{t-1},
    \pi_{t-1},\omega_{t-1}
  \right),
\end{multline}
where $(\pi,\omega)$ ranges over $[0,1]\times\{0,1\}$;
the expression $\infty-\infty$ is understood as, say, $0$.
The defensive forecasting algorithm is defined
in terms of the functions $f_t$.

\bigskip

\noindent
\mbox{\textsc{Defensive forecasting algorithm}}\nopagebreak

\smallskip

\parshape=11
\IndentI   \WidthI
\IndentII  \WidthII
\IndentIII \WidthIII
\IndentIII \WidthIII
\IndentII  \WidthII
\IndentII  \WidthII
\IndentII  \WidthII
\IndentII  \WidthII
\IndentIII \WidthIII
\IndentII  \WidthII
\IndentI   \WidthI
\noindent
FOR $t=1,2,\dots$:\\
  Read the experts' predictions
      $\gamma^{\bullet}_t=(\gamma_t^1,\ldots,\gamma_t^N)\in[0,1]^N$,\\
    learning rates $\eta^{\bullet}_t=(\eta_t^1,\ldots,\eta_t^N)\in(0,\infty)^N$,\\
    and loss functions
      $\lambda^{\bullet}_t=(\lambda_t^1,\ldots,\lambda_t^N)\in\LLL^N$.\\
  Define $f_t:[0,1]\times\{0,1\}\to[-\infty,\infty]$ by (\ref{eq:f}).\\
  If $f_t(0,1)\le0$, predict $\pi_t:=0$ and go to R.\\
  If $f_t(1,0)\le0$, predict $\pi_t:=1$ and go to R.\\
  Otherwise
  (if both $f_t(0,1)>0$ and $f_t(1,0)>0$),\\
    take any $\pi$ satisfying $f_t(\pi,0)=f_t(\pi,1)$ and predict $\pi_t:=\pi$.\\
  R: Read Reality's move $\omega_t\in\{0,1\}$.\\
END FOR

\bigskip

\noindent
The existence of a $\pi$ satisfying $f_t(\pi,0)=f_t(\pi,1)$
will be proved in Lemma \ref{lem:levin} below.
We will see that the function $f_t(\pi):=f_t(\pi,1)-f_t(\pi,0)$
takes values of opposite signs at $\pi=0$ and $\pi=1$.
Therefore, a root of $f_t(\pi)=0$ can be found by, e.g., bisection
(see \cite{press/etal:1992}, Chapter 9,
for a review of bisection and more efficient methods,
such as Brent's).

\subsection*{Reductions}

The most important property of the defensive forecasting algorithm
is that it produces predictions $\pi_t$
such that the sequence
\begin{equation}\label{eq:Q}
  Q_t
  :=
  Q(\gamma^{\bullet}_1,\eta^{\bullet}_1,\lambda^{\bullet}_1,\pi_1,\omega_1,\ldots,
    \gamma^{\bullet}_t,\eta^{\bullet}_t,\lambda^{\bullet}_t,\pi_t,\omega_t)
\end{equation}
is non-increasing.
This property will be proved later;
for now, we will only check that it implies
the bound on the regret term given in Theorem \ref{thm:main}.
Since the initial value $Q_0$ of $Q$ is $1$, we have $Q_t\le1$ for all $t$.
And since $Q^n\ge 0$ for all $n$,
we have $Q^n\le N Q$ for all $n$.
Therefore, $Q^n_t$,
defined by (\ref{eq:Q}) with $Q^n$ in place of $Q$,
is at most $N$ at each step $t$.
By the definition of $Q^n$ this means that
$$
  \sum_{t=1}^T
  \eta^n_t
  \bigl(
    \lambda^n_t(\pi_t,\omega_t)
    -
    \lambda^n_t(\gamma^n_t,\omega_t)
  \bigr)
  \le
  \ln N,
$$
which is the bound claimed in the theorem.

In the proof of the inequalities $Q_0\ge Q_1\ge\cdots$
we will follow \cite{chernov/etal:2008supermartingales}
(for a presentation adapted to the binary case,
see \cite{vovk:arXiv0708.1503}).
The key fact we use is that $Q$ is a game-theoretic supermartingale.
Let us define this notion and prove its basic properties.

Let $E$ be any non-empty set.
A function $S:(E\times[0,1]\times\{0,1\})^*\to(-\infty,\infty]$
is called a \emph{supermartingale}
(omitting ``game-theoretic'')
if, for any $T$,
any $e_1,\ldots,e_T\in E$,
any $\pi_1,\ldots,\pi_T\in[0,1]$,
and any $\omega_1,\ldots,\omega_{T-1}\in\{0,1\}$,
it holds that
\begin{multline}\label{eq:super}
  \pi_T
  S(e_1,\pi_1,\omega_1,\ldots,e_{T-1},\pi_{T-1},\omega_{T-1},e_T,\pi_T,1)\\
  +
  (1-\pi_T)
  S(e_1,\pi_1,\omega_1,\ldots,e_{T-1},\pi_{T-1},\omega_{T-1},e_T,\pi_T,0)\\
  \le
  S(e_1,\pi_1,\omega_1,\ldots,e_{T-1},\pi_{T-1},\omega_{T-1}).
\end{multline}

\begin{remark}
  The standard measure-theoretic notion of a supermartingale
  is obtained when the arguments $\pi_1,\pi_2,\ldots$ in (\ref{eq:super})
  are replaced by the forecasts produced by a fixed forecasting system.
  See, e.g., \cite{shafer/vovk:2001} for details.
  Game-theoretic supermartingales are referred to
  as ``superfarthingales'' in \cite{dawid/vovk:1999}.
\end{remark}

A supermartingale $S$ is called \emph{forecast-continuous} if,
for all $T\in\{1,2,\ldots\}$,
all $e_1,\ldots,e_T\in E$,
all $\pi_1,\ldots,\pi_{T-1}\in[0,1]$,
and all $\omega_1,\ldots,\omega_T\in\{0,1\}$,
$$
  S(e_1,\pi_1,\omega_1,\ldots,e_{T-1},\pi_{T-1},\omega_{T-1},e_T,\pi,\omega_T)
$$
is a continuous function of $\pi\in[0,1]$.
The following lemma states the most important for us property
of forecast-continuous supermartingales.

\begin{lemma}\label{lem:levin}
  Let $S$ be a forecast-continuous supermartingale.
  For any $T$ and for any values of the arguments
  $e_1,\ldots,e_T\in E$,
  $\pi_1,\ldots,\pi_{T-1}\in[0,1]$,
  and $\omega_1,\ldots,\omega_{T-1}\in\{0,1\}$,
  there exists $\pi\in[0,1]$ such that,
  for both $\omega=0$ and $\omega=1$,
  \begin{multline*}
    S(e_1,\pi_1,\omega_1,\ldots,e_{T-1},\pi_{T-1},\omega_{T-1},e_T,\pi,\omega)\\
    \le
    S(e_1,\pi_1,\omega_1,\ldots,e_{T-1},\pi_{T-1},\omega_{T-1})\,.
  \end{multline*}
\end{lemma}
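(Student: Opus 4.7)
The plan is to reduce the lemma to a connectedness argument on $[0,1]$. Fix the history, abbreviate $h := (e_1,\pi_1,\omega_1,\ldots,e_{T-1},\pi_{T-1},\omega_{T-1})$ and $S_{*} := S(h)$. If $S_{*}=\infty$, any $\pi$ trivially works, so I may assume $S_{*}<\infty$. Introduce the two conditional values
\begin{equation*}
g_\omega(\pi) := S(h,e_T,\pi,\omega),\qquad \omega\in\{0,1\},
\end{equation*}
which, by forecast-continuity, are continuous in $\pi\in[0,1]$ with values in $(-\infty,\infty]$. The supermartingale condition, specialised to the last step, becomes
\begin{equation*}
\pi\, g_1(\pi) + (1-\pi)\, g_0(\pi)\;\le\; S_{*}\quad\text{for all } \pi\in[0,1],
\end{equation*}
and the goal is to find a single $\pi$ at which both $g_0(\pi)\le S_{*}$ and $g_1(\pi)\le S_{*}$ hold.

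I would then consider the sublevel sets $A_\omega := \{\pi\in[0,1] : g_\omega(\pi)\le S_{*}\}$. By continuity of $g_\omega$ into the extended real line, each $A_\omega$ is closed in $[0,1]$. Plugging $\pi=0$ into the displayed inequality (using the convention $0\cdot\infty=0$) gives $g_0(0)\le S_{*}$, so $0\in A_0$; symmetrically $1\in A_1$. Moreover $A_0\cup A_1 = [0,1]$: if some $\pi_0$ lay outside both, so $g_0(\pi_0)>S_{*}$ and $g_1(\pi_0)>S_{*}$ simultaneously, a short case split on whether $\pi_0\in\{0,1\}$ or $\pi_0\in(0,1)$ shows that the convex combination on the left-hand side strictly exceeds $S_{*}$, contradicting the supermartingale inequality.

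With $A_0$ and $A_1$ nonempty, closed, and covering the connected space $[0,1]$, they must intersect, and any $\pi\in A_0\cap A_1$ witnesses the lemma. The only subtle point, and the main obstacle to be careful about, is that $S$ may take the value $\infty$: this is why I first dispose of $S_{*}=\infty$, use $0\cdot\infty=0$ when evaluating the inequality at the endpoints, and work with sublevel sets (automatically closed under continuity into $(-\infty,\infty]$) rather than with the difference $g_1-g_0$, which could be an indeterminate $\infty-\infty$. An alternative intermediate-value argument applied to $g_1-g_0$, in the spirit of the bisection remark following the algorithm, would also work on regions where both functions are finite, but requires extra bookkeeping at infinite values which the connectedness approach avoids.
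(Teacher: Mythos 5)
Your proof is correct. It differs from the paper's argument mainly in packaging: the paper works with $f(\pi,\omega):=S(\ldots,e_T,\pi,\omega)-S(\ldots)$, disposes of the easy cases $f(0,1)\le 0$ and $f(1,0)\le 0$, and otherwise applies the intermediate value theorem to the difference $f(\pi,1)-f(\pi,0)$ to find an equalizing $\pi$ with $f(\pi,1)=f(\pi,0)$, at which the one-step supermartingale inequality forces both values to be $\le 0$. You instead observe that the two sublevel sets $A_\omega=\{\pi:g_\omega(\pi)\le S_*\}$ are closed, nonempty (containing $0$ and $1$ respectively), and cover $[0,1]$ by the same mean inequality, so connectedness of $[0,1]$ forces them to intersect. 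The topological content is the same (the intermediate value theorem is connectedness in disguise), and your handling of infinite values via $S_*=\infty$, the convention $0\cdot\infty=0$, and closed sublevel sets is clean and matches what the paper leaves implicit; note, though, that at interior points the mean inequality already forces both $g_0(\pi)$ and $g_1(\pi)$ to be finite, so the paper's difference never encounters $\infty-\infty$ where it is used. What the paper's route buys, and yours does not directly, is constructiveness: its case split and the equalizing root are exactly the pseudocode of the defensive forecasting algorithm (predict $0$, predict $1$, or solve $f_t(\pi,0)=f_t(\pi,1)$ by bisection), whereas your covering argument certifies existence of a suitable $\pi$ without exhibiting the point the algorithm actually computes. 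For the lemma as stated, both proofs are complete.
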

\begin{proof}
  Define a function $f:[0,1]\times\{0,1\}\to(-\infty,\infty]$ by
  \begin{multline*}
    f(\pi,\omega)
    :=
    S(e_1,\pi_1,\omega_1,\ldots,e_{T-1},\pi_{T-1},\omega_{T-1},e_T,\pi,\omega)\\
    -
    S(e_1,\pi_1,\omega_1,\ldots,e_{T-1},\pi_{T-1},\omega_{T-1})
  \end{multline*}
  (the subtrahend is assumed finite:
  there is nothing to prove when it is infinite).
  Since $S$ is a forecast-continuous supermartingale, 
  $f(\pi,\omega)$ is continuous in $\pi$ and
  \begin{equation}\label{eq:mean}
    \pi f(\pi,1)+(1-\pi)f(\pi,0)
    \le
    0
  \end{equation}
  for all $\pi\in[0,1]$.
  In particular,
  $f(0,0)\le 0$ and $f(1,1)\le 0$.

  Our goal is to show that for some $\pi\in[0,1]$
  we have $f(\pi,1)\le 0$ and $f(\pi,0)\le 0$.
  If $f(0,1)\le 0$, we can take $\pi=0$.
  If $f(1,0)\le 0$, we can take $\pi=1$.
  Assume that $f(0,1)>0$ and $f(1,0)>0$.
  Then the difference
  $$
    f(\pi)
    :=
    f(\pi,1)
    -
    f(\pi,0)
  $$
  is positive for $\pi=0$ and negative for $\pi=1$.
  By the intermediate value theorem,
  $f(\pi)=0$ for some $\pi\in(0,1)$.
  By (\ref{eq:mean})
  we have $f(\pi,1)=f(\pi,0)\le 0$.
\end{proof}

The fact that the sequence (\ref{eq:Q}) is non-increasing
follows from the fact (see below) that $Q$ is a supermartingale
(when restricted to the allowed moves for the players).
The proof of Lemma \ref{lem:levin},
as applied to the supermartingale $Q$,
is summarized in (\ref{eq:f}),
the pseudocode for the defensive forecasting algorithm,
and the paragraph following it.

The weighted sum of finitely many forecast-continuous supermartingales
taken with positive weights
is again a forecast-continuous supermartingale.
Therefore,
the proof will be complete
if we check that $Q^n$ is a forecast-continuous supermartingale
under the restriction that $\lambda^n_t$ is $\eta^n_t$-mixable
for all $n$ and $t$.
But before we can do this,
we will need to do some preparatory work in the next subsection.

\subsection*{Geometry of mixability and proper loss functions}

Assumption 1 and the compactness of $[0,1]$ imply that
the superprediction set (\ref{eq:Sigma}) is closed.
Along with the superprediction set,
we will also consider the \emph{prediction set}
\begin{equation*}
  \Pi_{\lambda}
  :=
  \left\{
    (x,y)\in[0,\infty)^2
    \st
    \exists\gamma\,\lambda(\gamma,0)=x\text{ and }\lambda(\gamma,1)=y
  \right\}.
\end{equation*}
In many cases,
the prediction set is the boundary of the superprediction set.
The prediction set can also be defined as the set of points
\begin{equation}\label{eq:Lambda}
  \Lambda_{\gamma}
  :=
  \left(
    \lambda(\gamma,0),
    \lambda(\gamma,1)
  \right)
\end{equation}
where $\gamma$ ranges over the prediction space $[0,1]$.
It is clear that the prediction set is compact.

Let us fix a constant $\eta>0$.
The prediction set of the generalized log loss game
(\ref{eq:generalized-log})
is the  curve $\{(x,y)\st \e^{-\eta x}+\e^{-\eta y}=1\}$ in $\bbbr^2$.
For each $\pi\in(0,1)$,
the \emph{$\pi$-point} of this curve is $\Lambda_{\pi}$,
i.e., the point
$$
  \left(
    -\frac{1}{\eta} \ln(1-\pi),
    -\frac{1}{\eta} \ln\pi
  \right).
$$
Since the generalized log loss function is proper,
the minimum of $(1-\pi)x+\pi y$ on the curve $\e^{-\eta x}+\e^{-\eta y}=1$
is attained at the $\pi$-point;
in other words,
the tangent of $\e^{-\eta x}+\e^{-\eta y}=1$
at the $\pi$-point is orthogonal to the vector $(1-\pi,\pi)$.

A \emph{shift} of the curve $\e^{-\eta x}+\e^{-\eta y}=1$
is the curve $\e^{-\eta(x-\alpha)}+\e^{-\eta(y-\beta)}=1$
for some $\alpha,\beta\in\bbbr$
(i.e., it is a parallel translation of $\e^{-\eta x}+\e^{-\eta y}=1$
by some vector $(\alpha,\beta)$).
The \emph{$\pi$-point} of this shift is the point $(\alpha,\beta)+\Lambda_{\pi}$,
where $\Lambda_{\pi}$ is the $\pi$-point of the original curve
$\e^{-\eta x}+\e^{-\eta y}=1$.
This provides us with a coordinate system
on each shift of $\e^{-\eta x}+\e^{-\eta y}=1$
($\pi\in(0,1)$ serves as the coordinate of the corresponding $\pi$-point).

It will be convenient to use the geographical expressions
``Northeast'' and ``Southwest''.
A point $(x_1,y_1)$ is \emph{Northeast} of a point $(x_2,y_2)$
if $x_1\ge x_2$ and $y_1\ge y_2$.
A set $A\subseteq\bbbr^2$ is \emph{Northeast}
of a shift of $\e^{-\eta x}+\e^{-\eta y}=1$
if each point of $A$ is Northeast of some point of the shift.
Similarly, a point is \emph{Northeast} of a shift of $\e^{-\eta x}+\e^{-\eta y}=1$
(or of a straight line with a negative slope)
if it is Northeast of some point on that shift (or line).
``Northeast'' is replaced by ``Southwest''
when the inequalities are $\le$ rather than $\ge$,
and we add the attribute ``strictly'' when the inequalities are strict.

It is easy to see that the loss function is $\eta$-mixable
if and only if for each point $(a,b)$ on the boundary of the superprediction set
there exists a shift of $\e^{-\eta x}+\e^{-\eta y}=1$
passing through $(a,b)$
such that the superprediction set lies
to the Northeast of the shift.
This follows from the fact that the shifts of $\e^{-\eta x}+\e^{-\eta y}=1$
correspond to the straight lines with negative slope
under the homeomorphism $E_{\eta}$:
indeed, the preimage of $ax+by=c$, where $a>0$, $b>0$, and $c>0$,
is $a\e^{-\eta x}+b\e^{-\eta y}=c$,
which is the shift of $\e^{-\eta x}+\e^{-\eta y}=1$
by the vector
$$
  \left(
    -\frac{1}{\eta} \ln\frac{a}{c},
    -\frac{1}{\eta} \ln\frac{b}{c}
  \right).
$$
A similar statement for the property of being proper is:
\begin{lemma}\label{lem:proper}
  Suppose the loss function $\lambda$ is $\eta$-mixable.
  It is a proper loss function if and only if
  for each $\pi$
  the superprediction set is to the Northeast of the shift
  of $\e^{-\eta x}+\e^{-\eta y}=1$
  passing through $\Lambda_{\pi}$
  (as defined by (\ref{eq:Lambda}))
  and having $\Lambda_{\pi}$ as its $\pi$-point.
\end{lemma}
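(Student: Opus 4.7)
The proof hinges on the observation that for any shift $C$ of $\e^{-\eta x} + \e^{-\eta y} = 1$, the linear functional $(1-\pi) x + \pi y$ attains its unique minimum on $C$ at the $\pi$-point of $C$. This follows directly from the orthogonality noted after the definition of the $\pi$-point (the tangent at the $\pi$-point is orthogonal to $(1-\pi, \pi)$), combined with the strict convexity of $C$ that follows from the convexity of $(x, y) \mapsto \e^{-\eta(x - \alpha)} + \e^{-\eta(y - \beta)}$; a direct parametrization and a derivative check make this rigorous.

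For the ``if'' direction, let $C_\pi$ denote the shift through $\Lambda_\pi$ with $\Lambda_\pi$ as its $\pi$-point. Fix $\pi \in (0, 1)$ and $\gamma' \in [0, 1]$. Since $\Lambda_{\gamma'} \in \Sigma_\lambda$, by hypothesis $\Lambda_{\gamma'}$ is Northeast of some $P \in C_\pi$; then, since $(1-\pi)$ and $\pi$ are non-negative,
\[
  (1-\pi) \lambda(\gamma', 0) + \pi \lambda(\gamma', 1)
  \ge
  (1-\pi) P_1 + \pi P_2
  \ge
  (1-\pi) \lambda(\pi, 0) + \pi \lambda(\pi, 1),
\]
where the last inequality is the observation applied to $C_\pi$. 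This yields propriety for $\pi \in (0, 1)$; Assumption 1 extends it by continuity to $\pi \in \{0, 1\}$.

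For the ``only if'' direction, fix $\pi \in (0, 1)$. By $\eta$-mixability there exists some shift $C$ through $\Lambda_\pi$ with $\Sigma_\lambda$ Northeast of $C$; let $\pi^*$ be the parameter for which $\Lambda_\pi$ is the $\pi^*$-point of $C$. Running the ``if'' argument with $C$ in place of $C_\pi$ shows that $\gamma' = \pi$ minimizes $(1-\pi^*)\lambda(\gamma', 0) + \pi^* \lambda(\gamma', 1)$ over $\gamma'$; propriety supplies the analogous statement with $\pi$ in place of $\pi^*$. In the smooth case (a unique tangent to the prediction set at $\Lambda_\pi$) the two normal directions $(1-\pi^*, \pi^*)$ and $(1-\pi, \pi)$ must be proportional, forcing $\pi^* = \pi$, so $C = C_\pi$ and $\Sigma_\lambda$ is Northeast of $C_\pi$.

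The main obstacle is the non-smooth case, in which the normal cone to $E_\eta(\Sigma_\lambda)$ at $E_\eta(\Lambda_\pi)$ has nontrivial dimension and several shifts through $\Lambda_\pi$ may support $\Sigma_\lambda$ from below. I would handle this by approximation: at nearby boundary points $\Lambda_{\pi'}$ with $\pi'$ close to $\pi$ the smooth argument picks out the canonical shift $C_{\pi'}$, and passing to the limit $\pi' \to \pi$---using continuity of the shift parameters $(\alpha, \beta)$ in $\pi'$ together with closedness of $\Sigma_\lambda$---transfers the Northeast property to $C_\pi$ itself.
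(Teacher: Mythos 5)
Your ``if'' direction is fine and is what the paper treats as immediate; your smooth-case ``only if'' argument is also sound. The genuine gap is exactly the non-smooth case that you defer to an approximation sketch. Your scheme assumes that for $\pi'$ arbitrarily close to $\pi$ the point $\Lambda_{\pi'}$ is a smooth boundary point of the superprediction set, so that the smooth argument identifies the supporting shift at $\Lambda_{\pi'}$ with the canonical one $C_{\pi'}$. But the lemma assumes only that $\lambda$ is proper, not strictly proper, and (as the paper remarks right after the lemma) the prediction set may then have corners. At a corner an entire interval $[c,d]$ of parameter values can satisfy $\Lambda_{\pi'}=\Lambda_{\pi}$: the direction $(1-\pi',\pi')$ sweeps the normal cone while the point stays put. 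For $\pi$ in the interior of such an interval every nearby $\pi'$ is again a corner parameter, so there are no nearby smooth points of the form $\Lambda_{\pi'}$ and the limit $\pi'\to\pi$ never gets started. Approximating instead by spatially nearby smooth boundary points does not repair this: their tangent shifts converge only to the two extreme supporting shifts at the corner, not to $C_{\pi}$ for $\pi$ strictly inside the corner's parameter interval, so you would still have to prove separately that the intermediate shift $C_{\pi}$ supports $\Sigma_{\lambda}$ --- which is precisely the content of the lemma in this case.

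The paper's proof avoids any smoothness assumption on $\Sigma_{\lambda}$ by a local contradiction that you may want to adopt: if some superprediction $s$ lies strictly Southwest of the canonical shift $A_1$ through $\Lambda_{\pi}$, take the shift $A_2$ through $\Lambda_{\pi}$ and $s$; since $E_{\eta}$ maps shifts to straight lines and $E_{\eta}(\Sigma_{\lambda})$ is convex, the arc of $A_2$ between $\Lambda_{\pi}$ and $s$ consists of superpredictions, and points of this arc sufficiently close to $\Lambda_{\pi}$ lie strictly Southwest of the tangent $A_3$ to $A_1$ at $\Lambda_{\pi}$, i.e.\ of the line through $\Lambda_{\pi}$ orthogonal to $(1-\pi,\pi)$; this contradicts propriety, because propriety (plus the fact that every superprediction is Northeast of some $\Lambda_{\gamma}$) forbids superpredictions strictly Southwest of that line. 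Only the explicit shift curves are ever differentiated, never the boundary of $\Sigma_{\lambda}$, so corners cause no difficulty. Unless you replace your limiting step by such an argument (or by a direct proof that the direction corresponding to $C_{\pi}$ lies in the normal cone of $E_{\eta}(\Sigma_{\lambda})$ at $E_{\eta}(\Lambda_{\pi})$), the corner case remains unproved.
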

\begin{proof}
  The part ``if'' is obvious,
  so we will only prove the part ``only if''.
  Let $\lambda$ be $\eta$-mixable and proper.
  Suppose there exists $\pi$
  such the shift $A_1$ of $\e^{-\eta x}+\e^{-\eta y}=1$
  passing through $\Lambda_{\pi}$
  and having $\Lambda_{\pi}$ as its $\pi$-point
  has some superpredictions strictly to its Southwest.
  Let $s$ be such a superprediction,
  let $A_2$ be the shift of $\e^{-\eta x}+\e^{-\eta y}=1$
  passing through $\Lambda_{\pi}$ and $s$,
  and let $A_3$ be the tangent to $A_1$ at the point $\Lambda_{\pi}$.
  Then there are points on $A_2$ between $\Lambda_{\pi}$ and $s$
  that lie strictly to the Southwest of $A_3$
  (take any point on $A_2$ between $\Lambda_{\pi}$ and $s$
  that is sufficiently close to $\Lambda_{\pi}$).
  By the $\eta$-mixability of $\lambda$ these points
  must be superpredictions,
  which contradicts $\lambda$ being a proper loss function
  (since $A_3$ is the straight line passing through $\Lambda_{\pi}$
  and orthogonal to $(1-\pi,\pi)$).
\end{proof}

Notice that we never assume our loss functions
to be strictly proper.
(Geometrically,
the difference between proper mixable loss functions
and strictly proper mixable loss functions
is that the former's prediction set is allowed to have corners.)

\subsection*{Proof of the supermartingale property}

Let $E\subseteq([0,1]^N\times(0,\infty)^N\times\LLL^N)$
consist of sequences
$$
  \left(
    \gamma^1,\ldots,\gamma^N,
    \eta^1,\ldots,\eta^N,
    \lambda^1,\ldots,\lambda^N
  \right)
$$
such that $\gamma^n$ is $\eta^n$-mixable for all $n=1,\ldots,N$.
We will only be interested in the restriction of $Q^n$ and $Q$
on $(E\times[0,1]\times\{0,1\})^*$;
these restrictions are denoted with the same symbols.

The following lemma completes the proof of Theorem \ref{thm:main}.
We will prove it without calculations,
unlike the proofs
(of different but somewhat similar properties)
presented in \cite{chernov/etal:2008supermartingales}
(and, specifically for the binary case, in \cite{vovk:arXiv0708.1503}).
\begin{lemma}\label{lem:supermartingale-general}
  The function $Q^n$ defined
  on $(E\times[0,1]\times\{0,1\})^*$ by (\ref{eq:Q-n})
  is a supermartingale.
\end{lemma}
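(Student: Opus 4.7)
The plan is to reduce the supermartingale inequality to a single-step inequality and then verify that single-step inequality using the geometric characterization of proper mixable loss functions in Lemma \ref{lem:proper}.

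First, I would observe that $Q^n$ has a product structure: it multiplies a new factor $e^{\eta^n_T(\lambda^n_T(\pi_T,\omega_T)-\lambda^n_T(\gamma^n_T,\omega_T))}$ at each step, and that factor depends only on the current-step data $(\gamma^n_T,\eta^n_T,\lambda^n_T)$ and on $(\pi_T,\omega_T)$. Therefore, after dividing both sides of the defining supermartingale inequality (\ref{eq:super}) by the (finite) previous value $Q^n_{T-1}$, what needs to be proved collapses to the one-step inequality
\begin{equation*}
  \pi\,\e^{\eta\bigl(\lambda(\pi,1)-\lambda(\gamma,1)\bigr)}
  +(1-\pi)\,\e^{\eta\bigl(\lambda(\pi,0)-\lambda(\gamma,0)\bigr)}
  \le 1,
\end{equation*}
where I have abbreviated $\eta:=\eta^n_T$, $\lambda:=\lambda^n_T$, $\gamma:=\gamma^n_T$, $\pi:=\pi_T$; by the membership condition for $E$, $\lambda$ is $\eta$-mixable and belongs to $\LLL$, hence is also proper. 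The case where $Q^n_{T-1}=\infty$ or one of the exponents is $+\infty$ is handled with the conventions already set in the paper; the case $Q^n_{T-1}=0$ makes the inequality trivial.

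Next, I would rewrite the desired inequality in the multiplicative form
\begin{equation*}
  (1-\pi)\,\e^{\eta\lambda(\pi,0)}\,\e^{-\eta\lambda(\gamma,0)}
  +\pi\,\e^{\eta\lambda(\pi,1)}\,\e^{-\eta\lambda(\gamma,1)}
  \le 1,
\end{equation*}
and recognize the left-hand side as the evaluation at $(x,y)=(\lambda(\gamma,0),\lambda(\gamma,1))=\Lambda_\gamma$ of the function $(x,y)\mapsto(1-\pi)\e^{\eta\lambda(\pi,0)-\eta x}+\pi\e^{\eta\lambda(\pi,1)-\eta y}$. This is precisely the equation of the unique shift of the curve $\e^{-\eta x}+\e^{-\eta y}=1$ that passes through $\Lambda_\pi=(\lambda(\pi,0),\lambda(\pi,1))$ and has $\Lambda_\pi$ as its $\pi$-point, because the required shift is by the vector $(\lambda(\pi,0)+\tfrac{1}{\eta}\ln(1-\pi),\lambda(\pi,1)+\tfrac{1}{\eta}\ln\pi)$, and expanding the shifted equation gives exactly this expression.

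Now I would apply Lemma \ref{lem:proper}: since $\lambda$ is $\eta$-mixable and proper, the superprediction set lies to the Northeast of that shift, so in particular $\Lambda_\gamma$ (which belongs to the prediction set and hence to the superprediction set) is Northeast of some point $(x_0,y_0)$ that lies on the shift. Since $\e^{-\eta x}$ and $\e^{-\eta y}$ are decreasing, this dominates the two terms on the left-hand side by their values at $(x_0,y_0)$, whose sum is exactly $1$ by the definition of the shift. This yields the one-step inequality and finishes the proof.

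The main obstacle is bookkeeping rather than mathematical: making sure the particular shift that Lemma \ref{lem:proper} provides is literally the same curve as the one whose equation gives the left-hand side of the one-step inequality, and handling the boundary cases where $\pi\in\{0,1\}$ or where $\lambda(\pi,\omega)$ or $\lambda(\gamma,\omega)$ take the value $+\infty$ (in which the inequality becomes trivial or reduces to a limiting version that is settled by Assumption 1 and Assumption 3).
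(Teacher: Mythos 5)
Your proposal is correct and follows essentially the same route as the paper's own proof: reduce the supermartingale property to the one-step inequality $\pi\e^{\eta(\lambda(\pi,1)-\lambda(\gamma,1))}+(1-\pi)\e^{\eta(\lambda(\pi,0)-\lambda(\gamma,0))}\le 1$, recognize its left-hand side as the equation of the shift of $\e^{-\eta x}+\e^{-\eta y}=1$ having $\Lambda_{\pi}$ as its $\pi$-point, and conclude via Lemma \ref{lem:proper} that the superprediction set (hence $\Lambda_{\gamma}$) lies to the Northeast of that shift. Your explicit treatment of the degenerate cases (infinite losses, $\pi\in\{0,1\}$) is a minor addition that the published proof leaves implicit, but it changes nothing essential.
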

\begin{proof}
  It suffices to check that it is always true that
  \begin{multline*}
    \pi_T
    \exp
    \left(
      \eta^n_T
      \left(
        \lambda^n_T(\pi_T,1)
        -
        \lambda^n_T(\gamma^n_T,1)
      \right)
    \right)\\
    +
    (1-\pi_T)
    \exp
    \left(
      \eta^n_T
      \left(
        \lambda^n_T(\pi_T,0)
        -
        \lambda^n_T(\gamma_T^n,0)
      \right)
    \right)
    \le
    1.
  \end{multline*}
  To simplify the notation, we omit the indices $n$ and $T$;
  this does not lead to any ambiguity.
  Using the notation
  $(a,b):=\Lambda_{\pi}=(\lambda(\pi,0),\lambda(\pi,1))$
  and
  $(x,y):=\Lambda_{\gamma}=(\lambda(\gamma,0),\lambda(\gamma,1))$,
  we can further simplify the last inequality to
  \begin{equation*}
    (1-\pi)
    \exp
    \left(
      \eta
      \left(
        a - x
      \right)
    \right)
    +
    \pi
    \exp
    \left(
      \eta
      \left(
        b - y
      \right)
    \right)
    \le
    1.
  \end{equation*}
  In other words,
  it suffices to check that the (super)prediction set
  lies to the Northeast of the shift
  \begin{equation}\label{eq:star}
    \exp
    \left(
      -\eta
      \left(
        x
        -
        a
        -
        \frac{1}{\eta}\ln(1-\pi)
      \right)
    \right)
    +
    \exp
    \left(
      -\eta
      \left(
        y
        -
        b
        -
        \frac{1}{\eta}\ln\pi
      \right)
    \right)
    =
    1
  \end{equation}
  of the curve $\e^{-\eta x}+\e^{-\eta y}=1$.
  The vector by which (\ref{eq:star}) is shifted is
  $$
    \left(
      a
      +
      \frac{1}{\eta}\ln(1-\pi),
      b
      +
      \frac{1}{\eta}\ln\pi
    \right),
  $$
  and so $(a,b)$ is the $\pi$-point of that shift.
  This completes the proof of the lemma:
  by Lemma \ref{lem:proper},
  the superprediction set indeed lies to the Northeast of that shift.
  \ifFULL\bluebegin
    [It remains to consider the degenerate cases
    when some of $a,b,x,y$ are infinite.]
  \blueend\fi
\end{proof}

\subsection*{A simple special case}

In the case where $\lambda^n_t=\lambda$ is the log loss function
and $\eta^n_t=1$ for all $n$ and $t$,
the supermartingale (\ref{eq:Q-n})
(which is in fact a martingale now)
becomes a likelihood ratio process:
namely, it becomes the ratio
$$
  \prod_{t=1}^T
  \frac{\tilde{\gamma}^n_t(\{\omega_t\})}{\tilde{\pi}_t(\{\omega_t\})},
$$
where $\tilde p$, $p\in[0,1]$,
stands for the probability measure on $\{0,1\}$
such that $\tilde{p}(\{1\})=p$.
The mixed martingale $Q$ becomes the likelihood ratio
with the Bayes mixture as the numerator,
and it is easy to see that in this case
defensive forecasting reduces to the Bayes rule.

\section{Defensive forecasting for specialist experts and the AA}

In this section we will find a more explicit version
of defensive forecasting in the case of specialist experts.
Our algorithm will achieve a slightly more general version
of the bound (\ref{eq:sleeping});
namely,
we will replace the $\ln N$ in (\ref{eq:sleeping})
by $-\ln p^n$ where $p^n$ is an \emph{a priori} chosen weight
for Expert $n$:
all $p^n$ are non-negative and sum to $1$.
Without loss of generality all $p^n$ will be assumed positive
(our algorithm can always be applied to the subset of experts
with positive weights).
Let $A_t$ be the set of awake experts at time $t$:
$A_t:=\{n\in\{1,\ldots,N\}\st\gamma_t^n\ne a\}$.

Let $\lambda$ be an $\eta$-mixable loss function.
By the definition of mixability
there exists a function $\Sigma(u_1,\ldots,u_k,\gamma_1,\ldots,\gamma_k)$
(called a \emph{substitution function})
such that:
\begin{itemize}
\item
  the domain of $\Sigma$
  consists of all sequences $(u_1,\ldots,u_k,\gamma_1,\ldots,\gamma_k)$,
  for all $k=0,1,2,\ldots$,
  of numbers $u_i\in[0,1]$ summing to 1, $u_1+\cdots+u_k=1$,
  and predictions $\gamma_1,\ldots,\gamma_k\in[0,1]$;
\item
  $\Sigma$ takes values in the prediction space $[0,1]$;
\item
  for any $(u_1,\ldots,u_k,\gamma_1,\ldots,\gamma_k)$ in the domain of $\Sigma$,
  the prediction $\gamma:=\Sigma(u_1,\ldots,u_k,\gamma_1,\ldots,\gamma_k)$ satisfies
  \begin{equation}\label{eq:substitution}
    \forall\omega\in\{0,1\}:
    \e^{-\eta\lambda(\gamma,\omega)}
    \ge
    \sum_{i=1}^k
    \e^{-\eta\lambda(\gamma_i,\omega)}
    u_i.
  \end{equation}
\end{itemize}
Fix such a function $\Sigma$.
Notice that its value $\Sigma()$ on the empty sequence
can be chosen arbitrarily,
that the case $k=1$ is trivial,
and that the case $k=2$ in fact covers the cases $k=3$, $k=4$, etc.

\bigskip

\noindent
\mbox{\textsc{Defensive forecasting algorithm
  for specialist experts}}\nopagebreak

\smallskip

\parshape=9
\IndentI   \WidthI
\IndentI   \WidthI
\IndentII  \WidthII
\IndentIII \WidthIII
\IndentII  \WidthII
\IndentIII \WidthIII
\IndentII  \WidthII
\IndentII  \WidthII
\IndentI   \WidthI
\noindent
$w^n_0:=p^n$, $n=1,\ldots,N$.\\
FOR $t=1,2,\dots$:\\
  Read the list $A_t$ of awake experts\\
    and their predictions $\gamma_t^n\in[0,1]$, $n\in A_t$.\\
  Predict
    $
      \pi_t
      := 
      \Sigma
      \left(
        \left(u^n_{t-1}\right)_{n\in A_t},
        \left(\gamma^n_t\right)_{n\in A_t}
      \right)
    $,\\
    where $u^n_{t-1} := w^n_{t-1} / \sum_{n\in A_t} w^n_{t-1}$.\\
  Read the outcome $\omega_t\in\{0,1\}$.\\
  Set
    $
      w^n_t
      :=
      w^n_{t-1}
      \e^{\eta(\lambda(\pi_t,\omega_t)-\lambda(\gamma^n_t,\omega_t))}
    $
    for all $n\in A_t$.\\
END FOR

\bigskip

\noindent
This algorithm is a simple modification of the AA,
and it becomes the AA when the experts are always awake.
In the case of the log loss function,
this algorithm was found by Freund et al.\ \cite{freund/etal:1997};
in this special case,
Freund et al.\ derive the same performance guarantee as we do.

\ifFULL\bluebegin
  It is interesting that the definition of $\pi_t$ in this algorithm
  involves $\pi_1,\ldots,\pi_{t-1}$
  (via the weights $w^n$).
  Such dependence is typical for defensive forecasting:
  cf., e.g., the K29 algorithm in \cite{vovk:2007nonasymptotic}.
  However, this dependence can be observed already in the Bayes rule:
  cf.\ the description of the ``insomniac'' algorithm for the log loss function
  in \cite{freund/etal:1997}.
\blueend\fi

\subsection*{Derivation of the algorithm}

In this derivation we will need the following notation.
For each history of the game,
let $A^n$, $n\in\{1,\ldots,N\}$,
be the set of steps at which Expert $n$ is awake:
$$
  A^n
  :=
  \{t\in\{1,2,\ldots\}\st n\in A_t\}.
$$
For each positive integer $k$,
$[k]$ stands for the set $\{1,\ldots,k\}$.

The method of defensive forecasting requires
(cf.\ Corollary \ref{cor:sleeping})
that at step $T$ we should choose $\pi=\pi_T$
such that, for each $\omega\in\{0,1\}$,
\begin{multline*}
  \sum_{n\in A_T}
  p^n
  \e^{\eta(\lambda(\pi,\omega)-\lambda(\gamma^n_T,\omega))}
  \prod_{t\in[T-1]\cap A^n}
  \e^{\eta(\lambda(\pi_t,\omega_t)-\lambda(\gamma^n_t,\omega_t))}\\
  +
  \sum_{n\in A_T^c}
  p^n
  \prod_{t\in[T-1]\cap A^n}
  \e^{\eta(\lambda(\pi_t,\omega_t)-\lambda(\gamma^n_t,\omega_t))}\\
  \le
  \sum_{n\in[N]}
  p^n
  \prod_{t\in[T-1]\cap A^n}
  \e^{\eta(\lambda(\pi_t,\omega_t)-\lambda(\gamma^n_t,\omega_t))}
\end{multline*}
where $A_T^c$ stands for the complement of $A_T$ in $[N]$:
$A_T:=[N]\setminus A_T$.
This inequality is equivalent to
\begin{multline*}
  \sum_{n\in A_T}
  p^n
  \e^{\eta(\lambda(\pi,\omega)-\lambda(\gamma^n_T,\omega))}
  \prod_{t\in[T-1]\cap A^n}
  \e^{\eta(\lambda(\pi_t,\omega_t)-\lambda(\gamma^n_t,\omega_t))}\\
  \le
  \sum_{n\in A_T}
  p^n
  \prod_{t\in[T-1]\cap A^n}
  \e^{\eta(\lambda(\pi_t,\omega_t)-\lambda(\gamma^n_t,\omega_t))}
\end{multline*}
and can be rewritten as
\begin{equation}\label{eq:requirement}
  \sum_{n\in A_T}
  \e^{\eta(\lambda(\pi,\omega)-\lambda(\gamma^n_T,\omega))}
  u^n_{T-1}
  \le
  1,
\end{equation}
where $u^n_{T-1}:=w^n_{T-1}/\sum_{n\in A_T}w^n_{T-1}$
are the normalized weights
$$
  w^n_{T-1}
  :=
  p^n
  \prod_{t\in[T-1]\cap A^n}
  \e^{\eta(\lambda(\pi_t,\omega_t)-\lambda(\gamma^n_t,\omega_t))}.
$$
Comparing (\ref{eq:requirement}) and (\ref{eq:substitution}),
we can see that it suffices to set
$$
  \pi
  :=
  \Sigma
  \left(
    \left(u^n_{T-1}\right)_{n\in A_T},
    \left(\gamma^n_T\right)_{n\in A_T}
  \right).
$$

\subsection*{Discussion of the algorithm}

The main difference of the algorithm of the previous subsection
from the AA
is in the way the experts' weights are updated.
The weights of the sleeping experts are not changed,
whereas the weights of the awake experts are multiplied by
$\e^{\eta(\lambda(\pi_t,\omega_t)-\lambda(\gamma^n_t,\omega_t))}$.
Therefore, Learner's loss serves as the benchmark:
the weight of an awake expert who performs better than Learner goes up,
the weight of an awake expert who performs worse than Learner goes down,
and the weight of a sleeping expert does not change.

\ifFULL\bluebegin
\section{Mixable loss functions and proper loss functions}

These two notions are close but different.
But we will see that any mixable loss function can be reparameterized
in such a way that it becomes proper.
The main assumption of this section
is that the superprediction set is closed.

Suppose $\lambda$ is $\eta$-mixable, $\eta>0$.
Each decision $\gamma\in[0,1]$ can be represented
by the point $(\lambda(\gamma,0),\lambda(\gamma,1))$
in the superprediction set.
The set of all $(\lambda(\gamma,0),\lambda(\gamma,1))$, $\gamma\in[0,1]$,
will be called the \emph{prediction set};
for typical games this set coincides with the boundary of the superprediction set.
As far as the attainable performance guarantees are concerned
(before we start paying attention to computational issues),
the only interesting part of the game of prediction
is its prediction set;
the game itself can be regarded as an arbitrary coordinate system
in the prediction set.
It will be convenient to introduce another coordinate system
in essentially the same set.

For each $p\in[0,1]$,
let $(a_p,b_p)$ be the point $(x,y)$ in the superprediction set
at which the minimum of $p y+(1-p)x$ is attained.
Since $\lambda$ is $\eta$-mixable,
the point $(a_p,b_p)$ is determined uniquely;
it is clear that the dependence of $(a_p,b_p)$ on $p$ is continuous.

We can now redefine the decision space and the loss function as follows:
the decision space becomes $[0,1]$ and the loss function becomes
\begin{equation*}
  \lambda(p,0):=a_p,
  \quad
  \lambda(p,1):=b_p.
\end{equation*}
The resulting game of prediction
is essentially the same as the original game
(one of the minor differences is that,
if the superprediction set has ``corners'',
a decision $\gamma\in[0,1]$ maybe split
into several decisions $p\in[0,1]$ in the new game,
all leading to the same losses).
Notice that the new loss function is proper.

[Introduce the transformation $\alpha$ explicitly.]

\begin{corollary}\label{cor:multbound-gen}
  Suppose that every $\lambda^n$ is $\eta^n$-mixable for some $\eta^n>0$,
  $n=1,\ldots,N$.
  Suppose that there exists a surjective mapping
  $\alpha:[0,1]\to[0,1]$
  such that for all $n=1,\ldots,N$,
  the functions $\lambda^n(\alpha(\pi),\omega)$
  are continuous and proper for $\omega\in\{0,1\}$ and $\pi\in[0,1]$.
  Then the defensive forecasting algorithm
  guarantees that in the game of prediction
  with $N$ experts' advice and loss functions $\lambda^1,\ldots,\lambda^N$
  the inequality
  $$
    L^{(n)}_t
    \le
    L_t^n + \frac{\ln N}{\eta^n}
  $$
  holds for all $t$ and all $n=1,\ldots,N$.
\end{corollary}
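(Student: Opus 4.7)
The plan is to reduce to Corollary~\ref{cor:multbound} by reparameterizing each loss function along the common map $\alpha$. For each $n$, define $\tilde\lambda^n(\pi,\omega):=\lambda^n(\alpha(\pi),\omega)$. By hypothesis, $\tilde\lambda^n$ is continuous in its first argument and is a proper scoring rule. Since $\alpha:[0,1]\to[0,1]$ is surjective, the prediction set of $\tilde\lambda^n$ coincides with that of $\lambda^n$, and hence $\Sigma_{\tilde\lambda^n}=\Sigma_{\lambda^n}$. Consequently $E_{\eta^n}(\Sigma_{\tilde\lambda^n})$ is the same convex set as $E_{\eta^n}(\Sigma_{\lambda^n})$, so $\tilde\lambda^n$ is $\eta^n$-mixable. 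Assumptions~2 and~3 for $\tilde\lambda^n$ follow from the corresponding assumptions for $\lambda^n$ together with the surjectivity of $\alpha$; therefore $\tilde\lambda^n\in\LLL$.

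Next I would lift the experts' moves. At each step $t$, when Expert~$n$ outputs $\gamma^n_t\in[0,1]$, pick any $\tilde\gamma^n_t\in\alpha^{-1}(\gamma^n_t)$, which is nonempty by surjectivity. Consider the auxiliary game of prediction with constant expert evaluators' advice, $N$ experts, loss functions $\tilde\lambda^1,\ldots,\tilde\lambda^N$, expert predictions $\tilde\gamma^n_t$, and the same outcomes $\omega_t$. By Corollary~\ref{cor:multbound}, the defensive forecasting algorithm in this auxiliary game produces predictions $\tilde\pi_t\in[0,1]$ satisfying
$$
  \sum_{t=1}^T \tilde\lambda^n(\tilde\pi_t,\omega_t)
  \le
  \sum_{t=1}^T \tilde\lambda^n(\tilde\gamma^n_t,\omega_t)
  +\frac{\ln N}{\eta^n}
$$
for every $T$ and every $n$.

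To finish, set $\pi_t:=\alpha(\tilde\pi_t)\in[0,1]$. This is a valid Learner strategy in the original game (Reality's moves are the same in both games, and $\tilde\pi_t$ depends only on information available to the learner). Substituting $\tilde\lambda^n(\cdot,\omega)=\lambda^n(\alpha(\cdot),\omega)$ into the display above turns its two sides into $L^{(n)}_T$ and $L^n_T+\ln N/\eta^n$ respectively, which is the desired bound.

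The only step that is not bookkeeping is the invariance $\Sigma_{\tilde\lambda^n}=\Sigma_{\lambda^n}$, and this I expect to be the main (though still mild) obstacle: it is precisely here that the surjectivity of $\alpha$ is essential, and it is what guarantees that $\eta^n$-mixability is preserved under the reparameterization. Note that $\alpha$ itself need not be continuous, since continuity of $\tilde\lambda^n$ in $\pi$ is imposed directly as part of the hypothesis.
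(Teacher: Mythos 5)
Your reduction is correct and is essentially the argument the paper intends: reparameterize each $\lambda^n$ along the common map $\alpha$, note that surjectivity leaves the (super)prediction set, and hence $\eta^n$-mixability, unchanged while propriety and continuity are given by hypothesis, and then apply Corollary~\ref{cor:multbound} in the reparameterized coordinates, translating the learner's move back through $\alpha$. Your observations that surjectivity is what preserves mixability (and Assumption~2) and that $\alpha$ itself need not be continuous are exactly the relevant points.
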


The sets of loss functions for which such an $\alpha$ exists:
\emph{equivalent sets}.
For example,
all proper loss functions form an equivalent set.
As shown at the beginning of this section,
each individual mixable loss function
forms an equivalent set.
It would be interesting to generalize Corollary \ref{cor:multbound-gen}
to the case where the loss functions $\lambda^1,\ldots,\lambda^N$
form an ``approximate equivalent set''.

The requirement that the loss functions $\lambda^1,\ldots,\lambda^N$
must form an equivalent class
means that the loss functions must be coordinated in some sense. 
Without a requirement of this kind,
one cannot obtain an interesting regret bound.

For example,
suppose that one expert regards a certain pair of prediction
and outcome as a success, 
and another expert regards the same pair as a failure.
Then there is no way to satisfy both experts.
More formally,
suppose that two loss functions have the following properties:
\begin{align*}
  &\lambda^1(0,0)=0,\\ 
  \min_\gamma(&\lambda^1(0,\gamma)+\lambda^1(0,1-\gamma))=\epsilon>0,\\
  &\lambda^2(\gamma,\omega)=\lambda^1(1-\gamma,\omega).
\end{align*}
(Actually, these loss functions differ by
a ``parameterization'' only. 
Note that one can take virtually any standard loss function
here, e.g., the quadratic or the logarithmic function;
$\epsilon=1/2$ for the former and $\epsilon=2\ln2$ for the latter.)
Reality always announces $\omega_t=0$,
the experts always announce $\gamma^1_t=0$, $\gamma^2_t=1$.
Then the losses of both experts are equal to zero.
And it is obvious that 
the losses of Learner at step $t$ satisfy the inequality
$L_t^{(1)}+L_t^{(2)}\ge \epsilon t$. 
Therefore,
either $L_t^{(1)}-L_t^1$ or 
$L_t^{(2)}-L_t^2$ exceeds $\epsilon t/2$,
that is, the regret is linear in the number of steps.

In this example, the two loss functions are ``maximally discoordinated'',
whereas in the theorem the loss functions are to be ``maximally coordinated''.
We do not known what happens in intermediate cases
and how to measure the ``degree of coordination''.

\section{More examples and special cases}

If the loss functions are sufficiently smooth,
one can use results of Haussler, Kivinen, and Warmuth~\cite{haussler/etal:1998}
to check the conditions of Corollary \ref{cor:multbound-gen}.
They call $\alpha(\pi)$ a Bayes-optimal prediction for bias $\pi$
if $\lambda(\alpha(\pi),\omega)$ is a ``strictly proper loss function at $\pi$''.
If $\lambda$ satisfies the conditions of Lemma~3.5 in~\cite{haussler/etal:1998},
then $\alpha(\pi)$ can be found from the following equation
(Equation~(3.8) there):
$$
  (1-\pi)\left.\frac{d\lambda(\gamma,0)}{d\gamma}\right|_{\gamma=\alpha(\pi)} 
  +
  \pi \left.\frac{d\lambda(\gamma,1)}{d\gamma}\right|_{\gamma=\alpha(\pi)}
  =
  0\,.
$$
Combining this with Theorem~3.1 there,
which actually gives sufficient conditions for mixability,
we get the following corollary of Corollary~\ref{cor:multbound-gen}.

\begin{corollary}
  Suppose that for all $n=1,\ldots,N$ and for $\omega=0,1$,
  the functions $\lambda^n(\gamma,\omega)$ 
  are three times differentiable in $\gamma\in(0,1)$.
  Suppose that for all $n=1,\ldots,N$ and all $\gamma\in(0,1)$,
  it holds that
  \begin{gather*}
    \frac{d\lambda^n(\gamma,0)}{d\gamma}>0\,,\quad
    \frac{d\lambda^n(\gamma,1)}{d\gamma}<0\,,\\
    s^n(\gamma)=\frac{d\lambda^n(\gamma,0)}{d\gamma}
    \frac{d^2\lambda^n(\gamma,1)}{d\gamma^2} 
    -
    \frac{d\lambda^n(\gamma,1)}{d\gamma}
    \frac{d^2\lambda^n(\gamma,0)}{d\gamma^2}
    >0\,,\\[0.3ex]
    \begin{split}
      \frac{1}{\eta^n}=\sup_{\gamma\in(0,1)}
      \frac{\frac{d\lambda^n(\gamma,0)}{d\gamma}
      \left(\frac{d\lambda^n(\gamma,1)}{d\gamma}\right)^2 
      - 
      \frac{d\lambda^n(\gamma,1)}{d\gamma}
      \left(\frac{d\lambda^n(\gamma,0)}{d\gamma}\right)^2
      }{s^n(\gamma)}\\
      <\infty\,.
    \end{split}
  \end{gather*}
  Suppose also that for all $\gamma\in(0,1)$
  and for all $n,k\in\{1,\ldots,N\}$
  $$
    \left.{\frac{d\lambda^n(\gamma,1)}{d\gamma}}\right/
       {\frac{d\lambda^n(\gamma,0)}{d\gamma}}=
    \left.{\frac{d\lambda^k(\gamma,1)}{d\gamma}}\right/
       {\frac{d\lambda^k(\gamma,0)}{d\gamma}}\,.
  $$
  Then the defensive forecasting algorithm
  guarantees that in the game of prediction with expert evaluators' advice
  with $N$ experts
  and the loss functions $\lambda^1,\ldots,\lambda^N$,
  it holds
  for all $t$ and for all $n=1,\ldots,N$ that
  $$
    L^{(n)}_t\le L_t^n + \frac{\ln N}{\eta^n}\,.
  $$
\end{corollary}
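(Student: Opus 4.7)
The plan is to verify the hypotheses of Corollary~\ref{cor:multbound-gen} by combining two results of Haussler, Kivinen, and Warmuth~\cite{haussler/etal:1998}: their Theorem~3.1, which gives a sufficient condition for $\eta$-mixability in terms of the first two derivatives of the loss function, and their Lemma~3.5, which identifies the Bayes-optimal prediction at bias~$\pi$ as the unique root of their Equation~(3.8).

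First I would match the derivative hypotheses of the present corollary with the premises of Haussler--Kivinen--Warmuth Theorem~3.1: the sign conditions on $d\lambda^n(\gamma,0)/d\gamma$ and $d\lambda^n(\gamma,1)/d\gamma$ say that the prediction set is the graph of a strictly decreasing curve, the condition $s^n(\gamma)>0$ says that this curve is strictly convex, and the supremum expression in the hypothesis is exactly the reciprocal of the mixability constant guaranteed by HKW Theorem~3.1. Hence each $\lambda^n$ is $\eta^n$-mixable, which already verifies the mixability hypothesis of Corollary~\ref{cor:multbound-gen}.

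Next I would construct the common reparameterization $\alpha:[0,1]\to[0,1]$. By HKW Equation~(3.8) the Bayes-optimal decision $\alpha^n(\pi)$ under $\lambda^n$ is the $\gamma$ solving
\[
  (1-\pi)\,\frac{d\lambda^n(\gamma,0)}{d\gamma}
  +\pi\,\frac{d\lambda^n(\gamma,1)}{d\gamma}=0;
\]
dividing by the positive quantity $d\lambda^n(\gamma,0)/d\gamma$ rewrites this as
\[
  \frac{d\lambda^n(\gamma,1)/d\gamma}{d\lambda^n(\gamma,0)/d\gamma}
  = -\frac{1-\pi}{\pi}.
\]
The last hypothesis of the corollary asserts that the left-hand side is independent of $n$, so the solution $\alpha^n(\pi)=:\alpha(\pi)$ is the same for every $n$. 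The defining property of the Bayes-optimal decision then immediately gives that $\lambda^n(\alpha(\cdot),\omega)$ is a proper loss function for every $n$, while continuity of $\alpha$ on $(0,1)$ follows from the implicit function theorem, strict monotonicity in $\gamma$ of the left-hand side above being ensured by $s^n(\gamma)>0$.

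The step I expect to be most delicate is surjectivity of $\alpha$ onto $[0,1]$ together with continuity of the reparameterized loss at the boundary $\pi\in\{0,1\}$: the hypotheses control the derivatives only on the open interval $(0,1)$, so one must extend $\alpha$ continuously to the endpoints by analysing the limits of the ratio on the left-hand side as $\gamma\to 0^+$ and $\gamma\to 1^-$ and verifying that they produce values of $\pi$ equal to $0$ and $1$. Once this boundary analysis is in place, all hypotheses of Corollary~\ref{cor:multbound-gen} are met and the bound $L^{(n)}_t\le L^n_t+(\ln N)/\eta^n$ follows at once.
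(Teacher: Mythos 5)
Your proposal follows essentially the same route as the paper: the paper's own justification is just the paragraph preceding the corollary, which invokes Haussler--Kivinen--Warmuth Theorem~3.1 for the $\eta^n$-mixability (with exactly the supremum formula in the hypothesis) and Lemma~3.5/Equation~(3.8) for the Bayes-optimal reparameterization $\alpha$, whose independence of $n$ is guaranteed by the equal-derivative-ratio condition, and then applies Corollary~\ref{cor:multbound-gen}. Your boundary/surjectivity concern for $\alpha$ is a real detail, but it is one the paper itself leaves implicit rather than a divergence in approach.
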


Note that if $\lambda(\gamma,\omega)$ is a strictly proper loss function,
then
$$
  \left.{\frac{d\lambda^n(\gamma,1)}{d\gamma}}\right/
  {\frac{d\lambda^n(\gamma,0)}{d\gamma}}
  =
  -\frac{1-\gamma}{\gamma}
$$
(which follows from the equation for $\alpha$ if $\alpha(\pi)=\pi$).

As an application of this method,
let us consider the Hellinger loss function
$\lambda(\gamma,0)=1-\sqrt{1-\gamma}$,
$\lambda(\gamma,1)=1-\sqrt{\gamma}$.
As is shown in~\cite{haussler/etal:1998},
this loss function is not proper,
but it becomes one under $\alpha(\pi)=\pi^2/(\pi^2+(1-\pi)^2)$.
So we get the following loss function:
\begin{align*}
  \lambda(\pi,0)&=1-\frac{1-\pi}{\sqrt{\pi^2+(1-\pi)^2}}\,,\\
  \lambda(\pi,1)&=1-\frac{\pi}{\sqrt{\pi^2+(1-\pi)^2}}\,.
\end{align*}
This is known as the spherical loss,
see \cite[Example~2]{gneiting/raftery:2007}
(that paper speaks about scores, i.e., gains rather than losses,
thus their notation differs from ours by the minus sign).

Let us mention one interesting question
concerning the mixability of strictly proper loss functions.
A loss function $\lambda$ is called regular
if it is finite everywhere except possibly
$\lambda(0,1)$ and $\lambda(1,0)$.
A strictly proper regular loss function
has a Savage representation (see~\cite{gneiting/raftery:2007}):
$$
  \lambda(\pi,\omega)=(\pi-\omega)G'(\pi)-G(\pi)\,,
$$
where $G$ is a strictly convex function
and $G'(\pi)$ is a subgradient [does it depend on $\omega$?] of $G$ at $\pi$
(that is, a value such that 
$G(\tilde\pi)\ge G(\pi)+G'(\pi)(\tilde\pi-\pi)$ 
for all $\tilde\pi\in[0,1]$;
if $G$ is differentiable, then $G'$ is its derivative).

If $G$ is four times differentiable, then
we can apply the corollary above.
All the conditions hold automatically
except one which reduces to
$$
  \frac{1}{\eta}=
  \sup_{\pi\in(0,1)}\frac{1}{\pi(1-\pi)G''(\pi)}<\infty\,.
$$
(Note that $G''(\pi)>0$, since $G$ is a strictly convex function.)
The requirement that $G$ is four (and even three) times
differentiable looks very excessive for this inequality.
It would be interesting and important to prove that 
if the inequality holds then $\lambda$ is mixable.
Note also that $G'(\pi)=\lambda(\pi,0)-\lambda(\pi,1)$,
thus it may happen that instead of $G''$
one can take a subgradient of $\lambda(\pi,0)-\lambda(\pi,1)$.
This could give an $\eta$-mixability criterion
for binary strictly proper loss functions.

\section{Conclusion}

The first method of prediction with expert advice
for general loss functions
was the strong aggregating algorithm,
proposed in \cite{vovk:1990}.
As well as its predecessors and special cases,
the weighted majority algorithm \cite{littlestone/warmuth:1994,vovk:1989}
and the Bayesian merging scheme \cite{desantis/etal:1988},
this algorithm was based on directly mixing the experts' predictions.
Many other algorithms based on direct mixing of predictions
have been proposed since then.

The method of defensive forecasting introduces another player
into the game of prediction, Sceptic.
This player is present in this paper only implicitly,
via his capital process (a game-theoretic supermartingale).
In this method mixing happens at a different place:
we mix different strategies for Sceptic.
Learner achieves his goals by preventing Sceptic's capital from growth.
In a sense, the method of defensive forecasting
is dual to the direct methods such as the strong aggregating algorithm.

[Law of probability = non-negative supermartingale.]

The results obtained by aggregation of prediction strategies
and by defensive forecasting are often incomparable:
e.g., the main problem of our EPSRC grant.
This paper gives another example of a result
which is easy to obtain using defensive forecasting
but which might not be obtainable using direct aggregation.

Some open problems:
\begin{itemize}
\item
  Prove lower bounds for Theorem \ref{thm:main}
  or, more importantly, some of its corollaries.
  Lower bounds for Theorem \ref{thm:standard}
  are proved in \cite{vovk:1998game} and \cite{haussler/etal:1998}
  (the latter proof might be easier to adapt for our purpose
  since it is adapted to mixable loss functions,
  which makes ``local'' considerations possible)
  for large numbers of experts;
  for a fixed number $N$ of experts,
  see \cite{vovk:1999derandomizing} and, especially, \cite{vovk:arXiv0708.1503}).
\item
  A distant goal is to compete with nonparametric classes
  of proper loss functions
  (such as the class of all loss functions):
  apply defensive forecasting or the AA
  to a dense function class of proper loss functions.
  Schervish's representation
  (see, e.g., \cite{gneiting/raftery:2007}, Theorem 3)
  might be useful in this context.
  It might be easier to compete
  with the parametric class of loss functions
  described in \cite{buja/etal:2005}, Section 11;
  it includes several interesting special cases.
\end{itemize}
\blueend\fi

\subsection*{Acknowledgements}

We are grateful to the anonymous Eurocrat
who coined the term ``expert evaluator''.
This work was supported in part by EPSRC grant EP/F002998/1.

\end{document}